\pdfminorversion=7

\documentclass[nohyperref]{article}


\usepackage{hyperref}       


  \usepackage[final]{neurips_2023}




\usepackage{amsmath,amsfonts,bm}









\def\eqref#1{equation~\ref{#1}}









\def\1{\bm{1}}

\def\eps{{\epsilon}}








\def\vtheta{{\bm{\theta}}}

\def\vc{{\bm{c}}}

\def\vf{{\bm{f}}}

\def\vo{{\bm{o}}}

\def\vx{{\bm{x}}}

\def\vz{{\bm{z}}}



\DeclareMathAlphabet{\mathsfit}{\encodingdefault}{\sfdefault}{m}{sl}
\SetMathAlphabet{\mathsfit}{bold}{\encodingdefault}{\sfdefault}{bx}{n}











\newcommand{\E}{\mathbb{E}}

\newcommand{\R}{\mathbb{R}}

\newcommand{\Var}{\mathrm{Var}}



\DeclareMathOperator*{\argmax}{arg\,max}
\DeclareMathOperator*{\argmin}{arg\,min}

\DeclareMathOperator{\sign}{sign}


\usepackage[utf8]{inputenc} 
\usepackage{lipsum} 

\usepackage[T1]{fontenc}

\usepackage{etoolbox}
\newbool{includeappendix}
\setbool{includeappendix}{true}


%

\ifdefined\isoverfull
	\overfullrule=1cm
\else
\fi

%





%

%

\usepackage{subcaption}


%


\usepackage{xcolor} 
%


\definecolor{my-full-blue}{HTML}{1F77B4}

\definecolor{my-full-orange}{HTML}{FF7F0E}

\definecolor{my-full-green}{HTML}{2CA02C}

\definecolor{my-full-red}{HTML}{d62728}

\definecolor{my-full-purple}{HTML}{9467bd}

\definecolor{my-full-brown}{HTML}{8c564b}

\definecolor{my-full-pink}{HTML}{e377c2}

\definecolor{my-full-gray}{HTML}{7f7f7f}

\definecolor{my-full-olive}{HTML}{bcbd22}

\definecolor{my-full-cyan}{HTML}{17becf}






\definecolor{c1}{RGB}{86, 100, 26}
\definecolor{c2}{RGB}{192, 175, 251}
\definecolor{c3}{RGB}{230, 161, 118}
\definecolor{c4}{RGB}{0, 103, 138}
\definecolor{c5}{RGB}{152, 68, 100}
\definecolor{c6}{RGB}{94, 204, 171}
\definecolor{c7}{RGB}{205, 205, 205}

\definecolor{csabr}{RGB}{152, 68, 100}
\definecolor{cibp}{RGB}{94, 204, 171}
\definecolor{ctaps}{RGB}{0, 103, 138}

\definecolor{cm1}{HTML}{1f77b4}
\definecolor{cm2}{HTML}{ff7f0e}
\definecolor{cm3}{HTML}{2ca02c}
\definecolor{cm4}{HTML}{d62728}
\definecolor{cm5}{HTML}{9467bd}
\definecolor{cm6}{HTML}{8c564b}
\definecolor{cm7}{HTML}{e377c2}
\definecolor{cm8}{HTML}{7f7f7f}
\definecolor{cm9}{HTML}{bcbd22}
\definecolor{cm10}{HTML}{17becf}

\colorlet{my-blue}{my-full-blue!30}
\colorlet{my-orange}{my-full-orange!30}
\colorlet{my-green}{my-full-green!30}
\colorlet{my-red}{my-full-red!30}
\colorlet{my-purple}{my-full-purple!30}
\colorlet{my-brown}{my-full-brown!30}
\colorlet{my-pink}{my-full-pink!30}
\colorlet{my-gray}{my-full-gray!30}
\colorlet{my-olive}{my-full-olive!30}
\colorlet{my-cyan}{my-full-cyan!30}


%


\usepackage{listings}

%
\usepackage{textcomp}

%
\usepackage{xcolor}



%

%
%
%
%
%

\usepackage[scaled=0.8]{beramono}

%
%
%
%



\definecolor{ckeyword}{HTML}{7F0055}
\definecolor{ccomment}{HTML}{3F7F5F}
\definecolor{cstring}{HTML}{2A0099}

%

\lstdefinestyle{numbers}{
	numbers=left,
	%
	framexleftmargin=20pt,
	%
	numberstyle=\tiny,
	%
	firstnumber=auto,
	%
	numbersep=1em,
	%
	xleftmargin=2em
}

\lstdefinestyle{layout}{
	frame=none,
	%
	captionpos=b,
}

\lstdefinestyle{comment-style}{
	morecomment=[l]//,
	%
	morecomment=[s]{/*}{*/},
	%
	commentstyle={\color{ccomment}\itshape},
}

\lstdefinestyle{string-style}{
	%
	morestring=[b]",%
	%
	morestring=[b]',%
	%
	stringstyle={\color{cstring}},
	%
	showstringspaces=false,%
}

\lstdefinestyle{keyword-style}{
	%
	keywordstyle={\ttfamily\bfseries},
	%
	morekeywords={
		function,
		constructor,
		int,
		bool,
		return,
		returns,
		uint
	},
	%
	morekeywords = [2]{},
	keywordstyle = [2]{\text},
	%
	%
	sensitive=true,
}

\lstdefinestyle{input-encoding}{
	inputencoding=utf8,
	%
	%
	extendedchars=true,
	%
	%
	literate=
	{ℝ}{$\reals$}1%
	{→}{$\rightarrow$}1%
	{α}{$\alpha$}1%
	{β}{$\beta$}1%
	{λ}{$\lambda$}1%
	{θ}{$\theta$}1%
	{ϕ}{$\phi$}1%
}

\lstdefinestyle{escaping}{
	%
	moredelim={**[is][\color{blue}]{\%}{\%}},
	%
	%
	escapechar=|,
	%
	%
	mathescape=true
}

\lstdefinestyle{default-style}{
	%
	basicstyle=\fontencoding{T1}\ttfamily\footnotesize,
	style=numbers,
	style=layout,
	style=comment-style,
	style=string-style,
	style=keyword-style,
	style=input-encoding,
	style=escaping,
	%
	%
	%
	tabsize=2,
	%
	upquote=true
}

\lstdefinelanguage{BASIC}{
	language=C++,
	style=default-style
}[keywords,comments,strings]%

\lstset{language=BASIC}

%
%



%
%

\usepackage[capitalize,noabbrev]{cleveref}

\crefname{listing}{Lst.}{listings}
\crefname{line}{Lin.}{Lin.}
\crefname{appendix}{App.}{App.}
\crefname{lemma}{Lemma}{Lemmas}
\Crefname{lemma}{Lemma}{Lemmas}
\crefname{thm}{Theorem}{Theorems}
\Crefname{thm}{Theorem}{Theorems}


%
%


\newcommand{\app}[1]{%
	\ifbool{includeappendix}{\cref{#1}}{the appendix}%
}
\newcommand{\App}[1]{%
	\ifbool{includeappendix}{\cref{#1}}{The appendix}%
}

%

\usepackage{tikz}


\usetikzlibrary{calc,decorations,decorations.pathmorphing}
\usetikzlibrary{positioning,fit,arrows}
\usetikzlibrary{decorations.markings}
\usetikzlibrary{shapes,shapes.geometric}
\usetikzlibrary{shadows,patterns,snakes}
\usetikzlibrary{backgrounds,decorations.pathreplacing,automata}
\usetikzlibrary{intersections}
\usetikzlibrary{angles,quotes}
\usetikzlibrary{plotmarks}
\usetikzlibrary{patterns}
\usetikzlibrary{arrows.meta}
\usetikzlibrary{shapes.misc}
\usetikzlibrary{chains}
\usetikzlibrary{calc}


\usepackage[utf8]{inputenc} 
\usepackage[T1]{fontenc}    
\usepackage{url}            
\usepackage{booktabs}       
\usepackage{amsfonts}       
\usepackage{nicefrac}       
\usepackage{microtype}      

\usepackage{graphicx}
\usepackage{adjustbox}
\usepackage{threeparttable}
\usepackage{thmtools, nameref}
\usepackage{multirow}
\usepackage{amssymb}
\usepackage{mathtools}
\usepackage{amsthm}
\usepackage[abbreviations]{foreign}
\usepackage{enumitem}
\usepackage{makecell}
\usepackage{algorithm}
\usepackage[noend]{algpseudocode} 
\usepackage{wrapfig}



\newtheorem{theorem}{Theorem}

\crefname{lemma}{Lemma}{lemmas}
\Crefname{lemma}{Lemma}{Lemmas}
\crefname{corollary}{corollary}{corollaries}
\Crefname{corollary}{Corollary}{Corollaries}


\DeclareMathOperator*{\relu}{ReLU}

\newcommand{\bc}[1]{\mathcal{#1}}
\newcommand{\mbf}[1]{\mathbf{#1}}

\newcommand{\bs}[1]{\boldsymbol{#1}}
\newcommand{\B}{\bc{B}}
\renewcommand{\L}{\bc{L}}

\newcommand{\w}{\vtheta}


\newcommand{\tool}{\textsc{TAPS}\xspace}

\newcommand{\toolp}{\textsc{STAPS}\xspace}

\newcommand{\mnbab}{\textsc{MN-BaB}\xspace}
\newcommand{\pgd}{\textsc{PGD}\xspace}
\newcommand{\milp}{\textsc{MILP}\xspace}

\newcommand{\crownibp}{\textsc{CROWN-IBP}\xspace}

\newcommand{\deepz}{\textsc{DeepZ}\xspace}
\newcommand{\ibp}{\textsc{IBP}\xspace}
\newcommand{\sabr}{\textsc{SABR}\xspace}
\newcommand{\ibpr}{\textsc{IBP-R}\xspace}
\newcommand{\diffai}{\textsc{DiffAI}\xspace}
\newcommand{\colt}{\textsc{COLT}\xspace}

\newcommand{\boxd}{\textsc{Box}\xspace}
\newcommand{\zono}{\textsc{Zonotope}\xspace}

\newcommand{\sortnet}{\textsc{SortNet}\xspace}

\newcommand{\cifar}{CIFAR-10\xspace}
\newcommand{\TIN}{\textsc{TinyImageNet}\xspace}

\newcommand{\mnist}{\textsc{MNIST}\xspace}

\newcommand{\cnnt}{\texttt{CNN3}\xspace}
\newcommand{\cnns}{\texttt{CNN7}\xspace}
\newcommand{\cnnss}{\texttt{CNN7}s\xspace}

\renewcommand{\th}{\textsuperscript{th}\xspace}


\usepackage{siunitx}
\newcolumntype{d}[1]{S[table-format=#1]}

\colorlet{cbackground}{c7!20}
\colorlet{cexact}{my-full-blue!45}
\colorlet{cexactlatent}{my-full-green!40}
\colorlet{cfwd}{black!100}
\colorlet{cpgd}{my-full-purple!75}
\colorlet{cpgdsignle}{my-full-red!90!black!65}

\colorlet{cbwd}{my-full-red!90!black!65}
\colorlet{cibp}{black!80}
\colorlet{ctool}{c4!100}
\colorlet{netinside}{c7!100}

\tikzstyle{toolstyle}=[dotted, line width = 1.2pt,draw=ctool]
\tikzstyle{ibpstyle}=[dash pattern=on 5pt off 2pt, cibp]
\tikzstyle{exactstyle}=[fill=cexact, opacity=1.0, draw=none]
\tikzstyle{exactlatentstyle}=[fill=cexactlatent, opacity=0.8, draw=none]
\tikzstyle{point}=[line width=0.5pt, draw=black, cross out, inner sep=0pt, minimum width=3pt, minimum height=3pt, anchor=center]
\tikzstyle{wcpoint}=[line width=0.5pt, draw=black, fill=black, circle, inner sep=0pt, minimum width=2.5pt, minimum height=2.5pt, anchor=center]
\tikzstyle{pgdarrow}=[color=cpgd, thick]
\tikzstyle{pgdsinglearrow}=[color=cpgdsignle, thick]
\tikzstyle{fwdarrow}=[color=cfwd, thick, dashed]
\tikzstyle{bwdarrow}=[color=cbwd,  line width = 1.2pt,, dotted]
\tikzstyle{pane}=[fill=cbackground, rectangle, rounded corners=2pt]

\newcommand{\newprotectedcommand}[2]{\newcommand{#1}{\protecting{#2}}}

\newprotectedcommand{\markerfwd}{\tikz[]{\draw[-stealth, fwdarrow] (0, 0) -- (0.3, 0); \node[anchor=center, minimum width=0pt, minimum height=4pt, inner sep=0pt] () at (0.15, 0.0) {};}\xspace}
\newprotectedcommand{\markerbwd}{\tikz[]{\draw[-stealth, bwdarrow] (0, 0) -- (0.3, 0); \node[anchor=center, minimum width=0pt, minimum height=4pt, inner sep=0pt] () at (0.15, 0.0) {};}\xspace}
\newprotectedcommand{\markerpgd}{\tikz[]{\draw[-stealth, pgdarrow] (0, 0) -- (0.3, 0); \node[anchor=center, minimum width=0pt, minimum height=4pt, inner sep=0pt] () at (0.15, 0.0) {};}\xspace}
\newprotectedcommand{\markerexact}{\tikz[]{\node[fill, aspect=1, inner sep=0pt, minimum size=2.1mm, exactstyle]{};}\xspace}
\newprotectedcommand{\markerexactlatent}{\tikz[]{\node[fill, aspect=1, inner sep=0pt, minimum size=2.1mm, exactlatentstyle]{};}\xspace}
\newprotectedcommand{\markeribp}{\tikz[]{\node[aspect=1, draw=cibp, inner sep=0pt, minimum size=2.1mm, ibpstyle, dash pattern=on 2.5pt off 1pt]{};}\xspace}
\newprotectedcommand{\markertool}{\tikz[]{\draw[draw=ctool, toolstyle, dash pattern=on 1pt off 0.75pt] (0,0.2) -- (0.2,0.2) -- (0.2,0);}\xspace}
\newprotectedcommand{\markerwc}{\tikz[]{\node[wcpoint] at (0,0.01) {};\node[] at (0,0) {};}\xspace}
\newprotectedcommand{\markerpoint}{\tikz[]{\node[point]{};}\xspace}
\newprotectedcommand{\markersinglepgd}{\tikz[]{\draw[pgdsinglearrow, dashed, line width =1.1pt] (0, 0) -- (0.3, 0); \node[anchor=center, minimum width=0pt, minimum height=4pt, inner sep=0pt] () at (0.15, 0.0) {};}\xspace}

\newprotectedcommand{\markermultipgd}{\tikz[]{\draw[pgdarrow, dotted, dash pattern=on 1pt off 0.75pt] (0,0.2) -- (0.2,0.2) -- (0.2,0);}\xspace}

\newprotectedcommand{\markerwcsingle}{\tikz[]{\node[point,pgdsinglearrow] at (0,0.0) {};\node[] at (0,0) {};}\xspace}
\newprotectedcommand{\markeradvpoint}{\tikz[]{\node[point,cpgd]{};}\xspace}

\newprotectedcommand{\markersabr}{\tikz[]{\node[fill, diamond, color=csabr,inner sep=0,  minimum size=2.5mm, aspect=0.5]{};}\xspace}
\newprotectedcommand{\markertaps}{\tikz[]{\node[aspect=1, fill=c4, inner sep=0pt, minimum size=2.1mm]{};}\xspace}
\newcommand{\markeribpintro}{\tikz[]{\node[fill, circle, aspect=1, color=c6, inner sep=0pt, minimum size=2.5mm]{};}\xspace}

\title{Connecting Certified and Adversarial Training}
%

\author{%
  Yuhao Mao, Mark Niklas Müller, Marc Fischer, Martin Vechev\\
  Department of Computer Science\\
  ETH Zurich, Switzerland\\
  \texttt{\{yuhao.mao, mark.mueller, marc.fischer, martin.vechev\}@inf.ethz.ch} \\
}

\begin{document}
\maketitle

\begin{abstract}
Training certifiably robust neural networks remains a notoriously hard problem. 
While adversarial training optimizes \emph{under-approximations} of the worst-case loss, which leads to insufficient regularization for certification, sound certified training methods, optimize loose \emph{over-approximations}, leading to over-regularization and poor (standard) accuracy. 
In this work, we propose \tool, an (unsound) certified training method that combines \ibp and \pgd training to optimize more precise, although not necessarily sound, worst-case loss approximations, reducing over-regularization and increasing certified and standard accuracies. 
Empirically, \tool achieves a new state-of-the-art in many settings, e.g., reaching a certified accuracy of $22\%$ on \TIN for $\ell_\infty$-perturbations with radius $\epsilon=1/255$. We make our implementation and networks public at \texttt{\href{https://github.com/eth-sri/taps}{github.com/eth-sri/taps}}.

\end{abstract}

\section{Introduction} \label{sec:introduction}
Adversarial robustness, \ie, a neural network's resilience to small input perturbations \citep{BiggioCMNSLGR13,SzegedyZSBEGF13}, has established itself as an important research area.

\paragraph{Neural Network Certification} can rigorously prove such robustness: While complete verification methods \citep{TjengXT19,BunelLTTKK20,ZhangWXLLJ22,FerrariMJV22} can decide every robustness property given enough (exponential) time, incomplete methods \citep{WongK18,SinghGPV19,ZhangWCHD18} trade precision for scalability.

\paragraph{Adversarial training} methods, such as \pgd \citep{MadryMSTV18}, aim to improve robustness by training with samples that are perturbed to approximately maximize the training loss. This can be seen as optimizing an \emph{under-approximation} of the worst-case loss. While it \emph{empirically} improves robustness significantly, it generally does not induce sufficient regularization for certification and has been shown to fail in the face of more powerful attacks \citep{TramerCBM20}.

\paragraph{Certified Training} methods, in contrast, optimize approximations of the worst-case loss, thus increasing certified accuracies at the cost of over-regularization that leads to reduced standard accuracies. 
In this work, we distinguish two certified training paradigms. 
Sound methods \citep{MirmanGV18,GowalIBP2018,ShiWZYH21} compute sound over-approximations of the worst-case loss via bound propagation. The resulting approximation errors induce a strong (over-)regularization that makes certification easy but causes severely reduced standard accuracies. Interestingly, reducing these approximation errors by using more precise bound propagation methods, empirically, results in strictly \emph{worse} performance, as they induce harder optimization problems \citep{jovanovic2022paradox}.  
This gave rise to unsound methods \citep{BalunovicV20,PalmaIBPR22,MuellerEFV22}, which aim to compute \emph{precise} but not necessarily sound approximations of the worst-case loss, reducing (over)-regularization and resulting in networks that achieve higher standard and certified accuracies, but can be harder to certify. Recent advances in certification techniques, however, have made their certification practically feasible \citep{FerrariMJV22,ZhangWXLLJ22}. 

We illustrate this in \cref{fig:intro_fig}, where we compare certified training methods with regard to their worst-case loss approximation errors and the resulting trade-off between certified and standard accuracy. On the left, we show histograms of the worst-case loss approximation error over test set samples (see \cref{sec:ablation} for more details). Positive values (right of the y-axis) correspond to over- and negative values (left of the y-axis) to under-approximations. 
As expected, we observe that the sound \ibp \citep{GowalIBP2018} always yields over-approximations (positive values) while the unsound \sabr \citep{MuellerEFV22} yields a more precise ($6$-fold reduction in mean error) but unsound approximation of the worst-case loss.
Comparing the resulting accuracies (right), we observe that this more precise approximation of the actual optimization objective, \ie, the true worst-case loss, by \sabr (\markersabr) yields both higher certified and standard accuracies than the over-approximation by \ibp (\markeribpintro). Intuitively, reducing the over-regularization induced by a systematic underestimation of the network's robustness allows it to allocate more capacity to making accurate predictions.

The core challenge of effective certified training is, thus, to compute precise (small mean error and low variance) worst-case loss approximations that induce a well-behaved optimization problem.

\begin{wrapfigure}[16]{r}{0.537\textwidth}
    \centering
    \vspace{-4mm}
    \begin{minipage}[c]{0.65\linewidth}
    \begin{subfigure}[b]{1.02\linewidth}
        \hspace{2mm}
        \includegraphics[width=.920\textwidth]{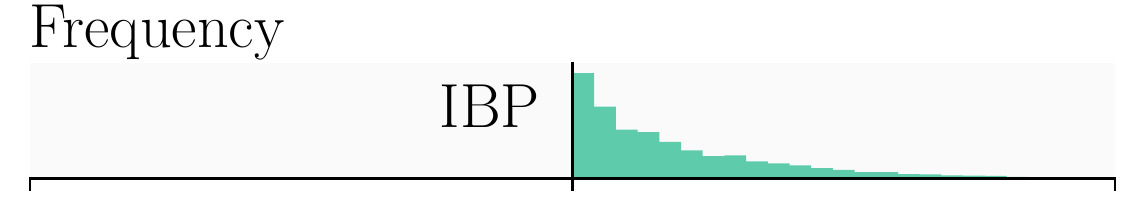}
        \vspace{-1.5mm}
    \end{subfigure}
    \begin{subfigure}[b]{1.02\linewidth}
        \hspace{2mm}
        \includegraphics[width=.920\textwidth]{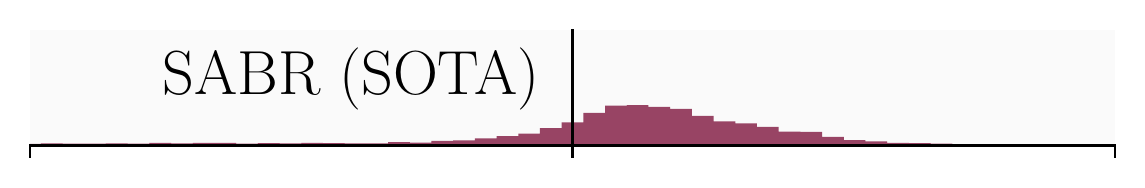}
        \vspace{-1.5mm}
    \end{subfigure}
    \begin{subfigure}[b]{1.02\linewidth}
        \centering
        \includegraphics[width=1.0\textwidth]{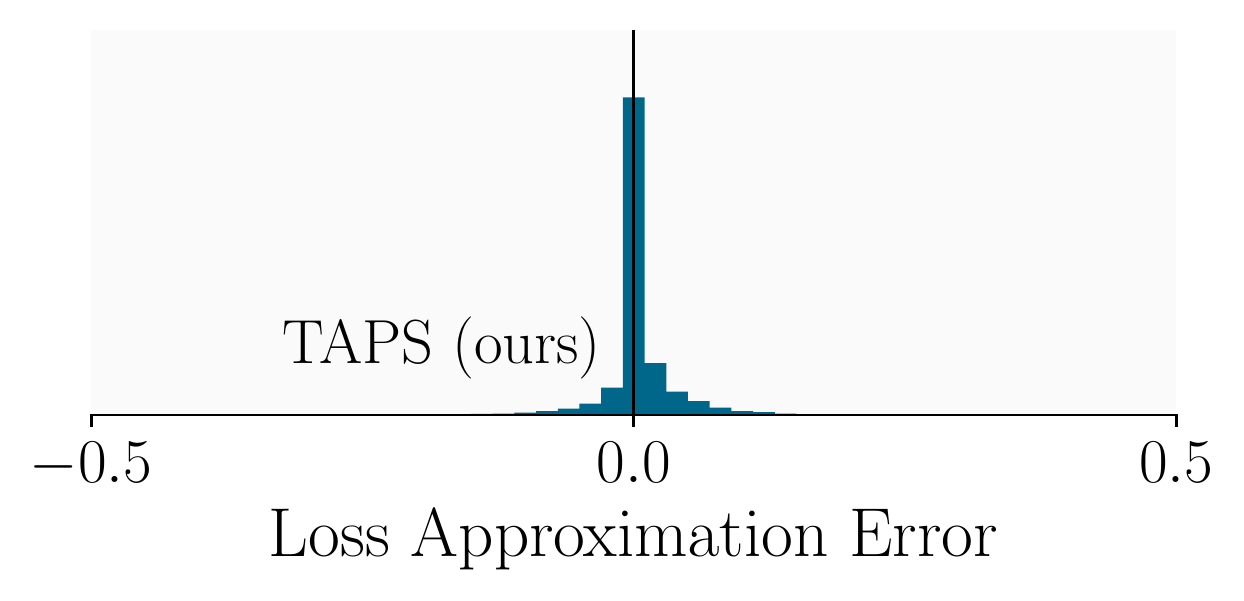}
    \end{subfigure}
    \end{minipage}
    \hfil
    \begin{minipage}[c]{0.3\linewidth}
        \centering
        \includegraphics[width=1.0\textwidth]{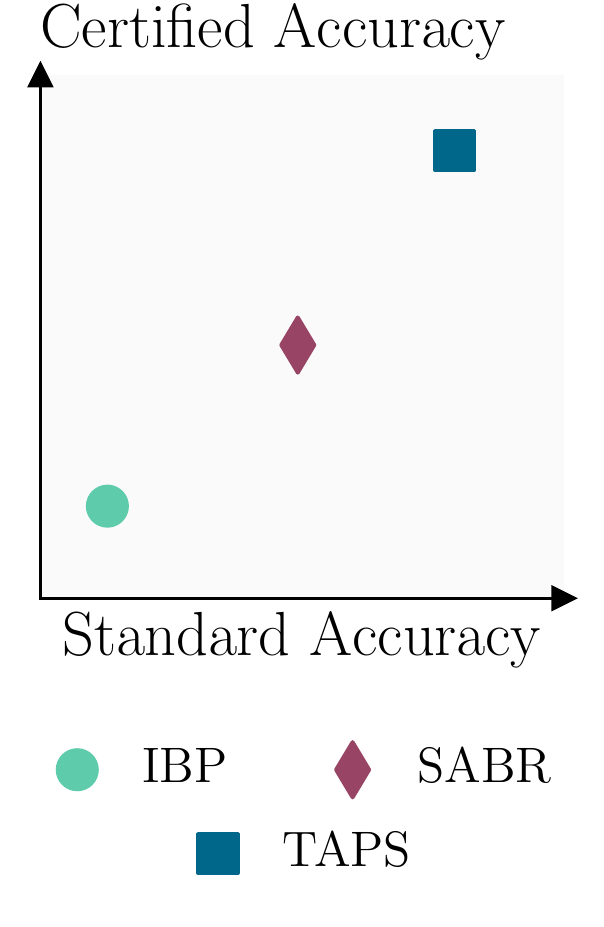}
    \end{minipage}

    \vspace{-2mm}
    \caption{Histograms of the worst-case loss approximation errors over the test set (left) for different training methods show that \tool (our work) achieves the most precise approximations and highest certified accuracy (right). Results shown here are for a small \cnnt.}
    \label{fig:intro_fig}
\end{wrapfigure}

\paragraph{This Work} proposes \textbf{T}raining via \textbf{A}dversarial \textbf{P}ropagation through \textbf{S}ubnetworks (\tool), a novel (unsound) certified training method tackling this challenge, thereby increasing both certified and standard accuracies. Compared to \sabr (\markersabr the current state-of-the-art), \tool (\markertaps) enjoys a further $5$-fold mean approximation error reduction and significantly reduced variance (\cref{fig:intro_fig} left), leading to improved certified and natural accuracies (right). The key technical insight behind \tool is to combine \ibp and \pgd training via a gradient connector, a novel mechanism that allows training the whole network jointly such that the over-approximation of \ibp and under-approximations of \pgd cancel out. We demonstrate in an extensive empirical study that \tool yields exceptionally tight worst-case loss approximations which allow it to improve on state-of-the-art results for \mnist, \cifar, and \TIN.


\section{Background on Adversarial and Certified Training}
Here, we provide the necessary background on adversarial and certified training.
We consider a classifier $F \colon \bc{X} \mapsto \bc{Y}$ parameterized by weights $\w$ and predicting a class $y_\text{pred} \coloneqq F(\vx) \coloneqq \argmax_{y \in \bc{Y}} f_{y}(x)$ for every input $\vx \in \bc{X} \subseteq \R^d$ with label $y \in \bc{Y} \coloneqq \{1, \dots, K\}$ where $\vf \colon \bc{X} \mapsto \mathbb{R}^{|\bc{Y}|}$ is a neural network, assigning a numerical logit $o_i \!\coloneqq \!f_i(\vx)$ to each class $i$. 


\paragraph{Adversarial Robustness}
We call a classifier adversarially robust on an $\ell_p$-norm ball $\bc{B}_p(\vx, \epsilon)$ if it classifies all elements within the ball to the correct class, \ie, $F(\vx') = y$ for all perturbed inputs $\vx' \in \bc{B}_p(\vx, \epsilon)$.
In this work, we focus on $\ell_\infty$-robustness with $\bc{B}_\infty(\vx, \epsilon) \coloneqq \{\vx' \mid \|\vx' - \vx\|_\infty \le \epsilon\}$ and thus drop the subscript $\infty$.

\paragraph{Neural Network Certification} is used to formally \emph{prove} robustness properties of a neural network, \ie, that all inputs in the region $\bc{B}(\vx, \epsilon)$ yield the correct classification. We call samples $\vx$ where this is successfull, certifiably robust and denote the portion of such samples as \emph{certified accuracy}, forming a lower bound to the true robustness of the analyzed network.

Interval bound propagation (IBP) \citep{MirmanGV18,GowalIBP2018} is a particularly simple yet effective certification method.
Conceptually, it computes an over-approximation of a network's reachable set by propagating the input region $\bc{B}(\vx, \epsilon)$ through the network, before checking whether all reachable outputs yield the correct classification. This is done by, first, over-approximating the input region $\bc{B}(\vx, \epsilon)$ as a \boxd $[\underline{\vx}^0, \overline{\vx}^0]$ (each dimension is described as an interval), centered at $\vc^0 = \vx$ and with radius $\bs{\delta}^0 = \epsilon$, such that we have the $i$\th dimension of the input $x^0_i \in [c^0_i - \delta^0_i, c^0_i + \delta^0_i]$. 
We then propagate it through the network layer-by-layer (for more details, see \citep{MirmanGV18,GowalIBP2018}), until we obtain upper and lower bounds $[\underline{\vo}^\Delta, \overline{\vo}^\Delta]$ on the logit differences $\vo^\Delta \coloneqq \vo - o_y \mbf{1}$. If we can now show dimensionwise that $\overline{\vo}^\Delta <  0$ (except for $\overline{\vo}^\Delta_y = 0$), this proves robustness.
Note that this is equivalent to showing that the maximum margin loss $\L_{\text{MA}}(\vx', y) \coloneqq \max_{i \neq y} \overline{o}^\Delta_i$ is less than $0$ for all perturbed inputs $\vx' \in \B(\vx, \epsilon)$.

\paragraph{Training for Robustness} aims to find a model parametrization $\w$ that minimizes the expected worst-case loss for some loss-function $\bc{L}$:
\begin{align}
	&\w = \argmin_{\w} \E_{\vx, y} \left[ \max_{\vx' \in \B(\vx, \epsilon)} \L(\vx', y) \right].\label{eq:opt_robust}
\end{align}
As the inner maximization objective in \cref{eq:opt_robust} can generally not be solved exactly, it is often under- or over-approximated, giving rise to adversarial and certified training, respectively.

\paragraph{Adversarial Training} optimizes a lower bound on the inner maximization problem in \cref{eq:opt_robust} by training the network with concrete samples $\vx' \in \bc{B}(\vx, \epsilon)$ that (approximately) maximize the loss function.
A well-established method for this is \emph{Projected Gradient Descent (PGD)} training \citep{MadryMSTV18} which uses the Cross-Entropy loss $\L_{\text{CE}}(\vx, y) \coloneqq \ln \big( 1 + \sum_{i \neq y} \exp(f_i(\vx) - f_y(\vx)) \big)$.
Starting from a random initialization point $\hat{\vx}_0 \in \bc{B}(\vx, \epsilon)$, it performs $N$ update steps $$\hat{\vx}_{n+1}=\Pi_{\bc{B}(\vx, \epsilon)}\hat{\vx}_n + \eta \sign(\nabla_{\hat{\vx}_n} \bc{L}_{\text{}}(\hat{\vx}_n,y))$$ with step size $\eta$ and projection operator $\Pi$.
Networks trained this way typically exhibit good empirical robustness but remain hard to formally certify and vulnerable to stronger or different attacks \citep{TramerCBM20,Croce020a}.

\paragraph*{Certified Training\normalfont{,}} in contrast, is used to train \emph{certifiably} robust networks.
In this work, we distinguish two classes of such methods: while \emph{sound} methods optimize a sound upper bound of the inner maximization objective in \cref{eq:opt_robust}, \emph{unsound} methods sacrifice soundness to use an (in expectation) more precise approximation.
Methods in both paradigms are often based on evaluating the cross-entropy loss $\L_\text{CE}$ with upper bounds on the logit differences $\overline{\vo}^\Delta$.

\ibp (a sound method) uses sound \boxd bounds on the logit differences, yielding
\begin{equation} \label{eq:ibp}
	\L_{\text{IBP}}(\vx, y, \epsilon) \coloneqq \ln \big( 1 + \sum_{i \neq y} \exp(\overline{o}^\Delta_i) \big).
\end{equation}
\sabr (an unsound method) \citep{MuellerEFV22}, in contrast, first searches for an adversarial example $\vx' \in \B(\vx', \epsilon - \tau)$ and then computes \boxd-bounds only for a small region $\B(\vx', \tau) \subset \B(\vx, \epsilon)$ (with $\tau < \epsilon$) around this adversarial example $\vx'$ instead of the original input $\vx$
\begin{equation}
	\L_{\text{\sabr}}\coloneqq \max_{\vx' \in \B(\vx', \epsilon - \tau)} \L_{\text{IBP}}(\vx', y, \tau).
\end{equation}
This generally yields a more precise (although not sound) worst-case loss approximation, thereby reducing over-regularization and improving both standard and certified accuracy.

%


\begin{figure*}[t]
\centering

\resizebox{1.0\textwidth}{!}{
\begin{tikzpicture}
	\def\ysep{2.3cm};
	\def\xsep{0.5cm};
	\coordinate (arrowsepx) at (0.1, 0);
	\coordinate (arrowsepy) at (0.0, 0.1);

	\node[pane, minimum width=2.8cm, minimum height=3.4cm] (input_box) at (0.5, 1.0) {};
	\node[align=center, anchor=north] at (input_box.north) {\textbf{Input} $\vx$};
	\node[below right=\xsep and 0.0cm of input_box.north, anchor=north] (input_content) {
	\centering
	\begin{tikzpicture} 
		\draw[exactstyle] (-0.5, 0) -- (1.5, 0) -- (1.5, 2) -- (-0.5, 2) -- cycle;
		\draw[ibpstyle] (-0.5, 0) -- (1.5, 0) -- (1.5, 2) -- (-0.5, 2) -- cycle;
		\node[point] at (0.5,0.95) {};
		\node at (0.65,0.95) {\small $\vx$};
		\draw [line width=0.5pt, decoration={brace, mirror, raise=1mm}, decorate] (-0.5, 0) -- (1.5, 0);
		\node at (0.5,-0.2) {\small $2\epsilon$};
	\end{tikzpicture} 
	};

	\node[pane, minimum width=3.2cm, minimum height=3.4cm, below right=0 and \ysep of input_box.north east, anchor=north west] (intermediate_box) {};
	\node[align=center, anchor=north] at (intermediate_box.north) {\textbf{Embedding Space}};
	\node[below right=\xsep and 0.2cm of intermediate_box.north, anchor=north] (intermediate_content) {
	\centering
	\begin{tikzpicture} 
		\draw[exactlatentstyle] (3.0, -0.2) -- (4.9, -0.2) -- (4.9, 2.2) -- (3.0, 2.2) -- cycle;
		\draw[exactstyle] (3.2, 1.2) -- (4.3, -0.2) -- (4.5, -0.2) -- (4.5, 1.5) -- (3.65, 2.0) -- (3.4, 2.0) -- cycle;
		\draw[ibpstyle] (3.0, -0.2) -- (4.9, -0.2) -- (4.9, 2.2) -- (3.0, 2.2) -- cycle;

		\node at (3.2,1.2) {\tiny $\underline{z}_1$};
		\node at (4.7,1.2) {\tiny $\overline{z}_1$};
		\node at (3.95,2.2) {\tiny $\overline{z}_2$};
		\node at (3.95,0.2) {\tiny $\underline{z}_2$};

		\draw[-stealth, pgdarrow] (4.25, 1.1) -- (4.75, 1.6);
		\draw[-stealth, pgdarrow] (4.75, 1.6) -- (4.9, 2.1);
		\node[point, pgdarrow] () at (4.9, 2.1) {};
		\node[color=cpgd] at (5.15, 2.2) {\tiny $\hat{\vz}'$};

		\draw[-stealth, pgdarrow] (3.85, 1.0) -- (3.25, 0.5);
		\draw[-stealth, pgdarrow] (3.25, 0.5) -- (3.25, -0.2);
		\node[point, pgdarrow] at (3.25, -0.2) {};
		\node[color=cpgd] at (3.35, -0.16) {\tiny $\hat{\vz}''$};
	\end{tikzpicture} 
	};

	\node[pane, minimum width=5.7cm, minimum height=3.1cm, below right=0 and \ysep of intermediate_box.north east, anchor=north west] (output_box) {};
	\node[align=center, anchor=north] at (output_box.north) {\textbf{Output}};

	\begin{scope}[shift={(-5,0)}]
	\node[below right=\xsep and -0.13cm of output_box.north, anchor=north] (output_content) {
	\centering
	\begin{tikzpicture} 
		\draw[exactlatentstyle] (9.0, -0.1) -- (9.1, -0.6) -- (9.2, -1.1) -- (9.4, -1.5) -- (9.9, -1.35) -- (10.2, -1.1) -- (10.3, -0.9) -- (10.2, -0.4)  -- (10., -0.1) -- (9.9, 0.3) -- (9.5,0.1)--(9.3, 0.1);
		\draw[exactstyle] (9.4, -0.2) -- (9.3, -0.6) -- (9.3, -1.1) -- (9.4, -1.2) -- (9.9, -1.2) -- (10.1, -0.8) -- (10.0, -0.5) -- (10.0, -0.2) -- (9.6, -0.1);

		\draw[-stealth, pgdarrow] (9.65, -0.6) -- (9.5, -0.2);
		\draw[-stealth, pgdarrow] (9.5, -0.2) -- (9.5, 0.1);
		\node[point, pgdarrow] () at (9.5, 0.1) {};

		\draw[-stealth, pgdarrow] (9.8, -0.6) -- (10.1, -0.7);
		\draw[-stealth, pgdarrow] (10.1, -0.7) -- (10.3, -0.9);
		\node[point, pgdarrow] () at (10.3, -0.9) {};

		\draw[ibpstyle] (8.6, 0.6) -- (10.8, 0.6) -- (10.8, -1.5) -- (8.6, -1.5)-- cycle;
		\draw[toolstyle] (8.3, 0.1) -- (10.3, 0.1) -- (10.3, -1.8);

		\node[wcpoint] () at (9.9, 0.3) {};
		\node[wcpoint] () at (10.3, 0.1) {};
		\node[wcpoint] () at (10.8, 0.6) {};
		\node[wcpoint] () at (10.0, -0.2) {};


		\draw[black] (10.8, 0.3) -- (11.1, 0.45) node[right, font=\footnotesize, text=black] {Box relaxation};
		\draw[black] (10.3, -.25) -- (11.1, 0.0) node[right, font=\footnotesize, text=black] {\tool};
		\draw[black] (10.22, -0.5) -- (11.1, -0.6) node[right, font=\footnotesize, text=black] {Exact propagation};
		\draw[black] (10., -0.35) -- (11.1, -0.6) node[right, font=\footnotesize, text=black] {Exact propagation};
	\end{tikzpicture} 
	};
\end{scope}

	\node[rectangle, fill, align=center, color=netinside, text=black] (fe) at ($(input_box.east)!0.35!(intermediate_box.west)$) {$\vf_{E}$};
	\draw[-stealth, fwdarrow] ($(input_box.east)+(arrowsepx)$) -- ($(fe.west)-(arrowsepx)$);
	\draw[-stealth, fwdarrow] ($(fe.east)+(arrowsepx)$) -- ($(intermediate_box.west)-(arrowsepx)$);

	\coordinate (fcO) at  ({$(intermediate_box.east)$} -| {$(output_box.north west)$});
	\node[rectangle, fill, align=center, color=netinside, text=black] (fc) at ($(intermediate_box.east)!0.65!(fcO)$) {$\vf_{C}$};
	\draw[-stealth, fwdarrow] ($(intermediate_box.east)+(arrowsepx)$) -- ($(fc.west)-(arrowsepx)$);
	\draw[-stealth, fwdarrow] ($(fc.east)+(arrowsepx)$) -- ($(fcO)-(arrowsepx)$);

	\node [pane, below left = 0.55cm and 0cm of output_box.south, align=left, anchor=north, minimum height=0.1cm] (loss) {Training Loss $\mathcal{L}$};
	\draw[-stealth, fwdarrow] ($(output_box.south)-(arrowsepy)$) -- ($(loss.north)+(arrowsepy)$);

	\node [pane, align=center, anchor=center, minimum width=2.7cm, minimum height=0.1cm] (grad) at ({$(intermediate_box)$} |- {$(loss)$}) {\textbf{Gradient Connector}};

	\draw[-stealth, bwdarrow] ($(loss.west)-(arrowsepx)$) -| ($(fc.south)-(arrowsepy)$) ;
	\node[below left = 0.1cm and 0.05cm of fc.south, color=cbwd] () {$\frac{d \mathcal{L}}{d \vtheta_C}$};
	\draw[-stealth, bwdarrow] ($(loss.west)-(arrowsepx)$) --  ($(grad.east)+(arrowsepx)$);
	\node[above right = 0.03cm and -0.05cm of grad.east, color=cbwd] () {$\frac{d \mathcal{L}}{d \hat{\vz}'}$, $\frac{d \mathcal{L}}{d \hat{\vz}''}$};
	\draw[-stealth, bwdarrow] ($(grad.west)-(arrowsepx)$)  -| ($(fe.south)-(arrowsepy)$);
	\node[below right = 0.1cm and 0.05cm of fe.south, color=cbwd] () {$\frac{d \mathcal{L}}{d \vtheta_E}$};
	\node[above left = 0.025cm and 0.05cm of grad.west, color=cbwd] () {$\frac{d \mathcal{L}}{d \overline{\vz}}$, $\frac{d \mathcal{L}}{d \underline{\vz}}$};
\end{tikzpicture}
}

\vspace{-1mm}
\caption{
Overview of \tool training.
First, forward propagation (\markerfwd) of a region $B(\vx, \epsilon)$ (\markerexact, left) around an input $\vx$ (\markerpoint) through the feature extractor $\vf_E$ yields the exact reachable set (\markerexact, middle) and its \ibp approximation $[\underline{\vz}, \overline{\vz}]$ (\markeribp, middle) in the embedding space. Further \ibp propagation through the classifier $\vf_C$ would yield an imprecise box approximation  (\markeribp, right) of the reachable set (\markerexact, right). Instead, \tool conducts an adversarial attack (\markerpgd) in the embedding space \ibp approximation (\markerexactlatent) yielding an under-approximation (\markertool) of its reachable set (\markerexactlatent, right).
We illustrate the points realizing the worst-case loss in every output region with \markerwc and enable back-propagation (\markerbwd) through the adversarial attack by introducing the gradient connector (discussed in \cref{sec:connector}).
}
\vspace{-2mm}
\label{fig:overview}
\end{figure*}
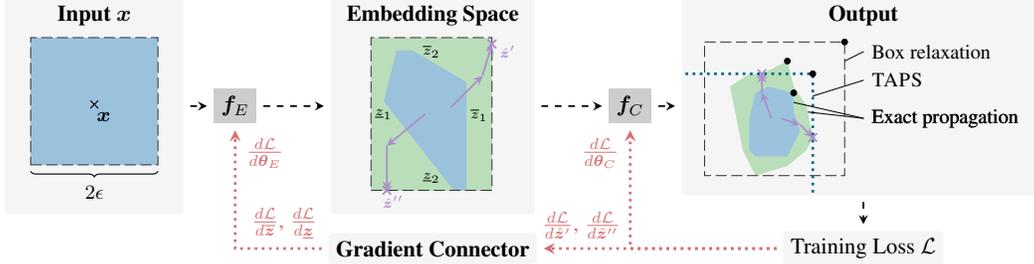

\section{Precise Worst-Case Loss Approximation} \label{sec:tool}
In this section, we first introduce \tool, a novel certified training method combining \ibp and \pgd training to obtain more precise worst-case loss estimates, before showing that this approach is orthogonal and complementary to current state-of-the-art methods.

\subsection{\tool~-- Combining \ibp and \pgd}
The key insight behind \tool is that adversarial training with PGD and certified training with IBP complement each other perfectly: (i) both yield well-behaved optimization problems, as witnessed by their empirical success, and (ii) we can combine them such that the over-approximation errors incurred during \ibp are compensated by the under-approximations of \pgd. 
\tool harnesses this as follows: For every sample, we first propagate the input region part-way through the network using \ibp and then conduct \pgd training within the thus obtained \boxd approximation. The key technical challenge with this approach 
lies in connecting the gradients of the two propagation methods and thus enabling joint training of the corresponding network portions. 

We now explain \tool in more detail along the illustration in \cref{fig:overview}. We first partition a neural network $\vf$ with weights $\vtheta$ into a \emph{feature extractor} $\vf_E$ and a \emph{classifier} $\vf_C$ with parameters $\vtheta_E$ and $\vtheta_C$, respectively, such that we have $\vf_\vtheta = \vf_C \circ \vf_E$ and $\vtheta = \vtheta_E \cup \vtheta_C$.
We refer to the output space of the feature extractor as the \emph{embedding space}. 
Given an input sample $\vx$ (illustrated as \markerpoint in \cref{fig:overview}) and a corresponding input region $\B(\vx, \epsilon)$ (\markerexact in the input panel), training proceeds as follows:
During the forward pass (black dashed arrows \markerfwd), we first use \ibp to compute a \boxd over-approximation $[\underline{\vz}, \overline{\vz}]$ (dashed box \markeribp) of the feature extractor's exact reachable set (blue region \markerexact), shown in the middle panel of \cref{fig:overview}. Then, we conduct separate adversarial attacks (\markerpgd) within this region in the embedding space (\markerexactlatent) to bound all output dimensions of the classifier. This yields latent adversarial examples $\hat{\vz} \in [\underline{\vz}, \overline{\vz}]$, defining the \tool bounds $\overline{\vo}_{\text{\tool}}^\Delta$ (dotted lines \markertool in the output space) on the network's output. This way, the \emph{under-approximation} of the classifier via \pgd, partially compensates the \emph{over-approximation} of the feature extractor via \ibp. Full \ibp propagation, in contrast, continues to exponentially accumulate approximation errors \citep{MuellerEFV22,ShiWZYH21}, yielding the much larger dashed box \markeribp. 
We now compute the \tool loss $\L_{\text{\tool}}$ analogously to $\L_{\text{IBP}}$ (\cref{eq:ibp}) by plugging the \tool bound estimate $\overline{\vo}_{\text{\tool}}^\Delta$ into the Cross-Entropy loss.
Comparing the resulting losses (illustrated as \markerwc and growing towards the top right), we see that while the \tool bounds are not necessarily sound, they yield a much better approximation of the true worst-case loss.


During the backward pass (orange dotted arrows \markerbwd in \cref{fig:overview}), we compute the gradients w.r.t. the classifier's parameters $\vtheta_C$ and the latent adversarial examples $\hat{\vz}$ (classifier input) as usual. However, to compute the gradients w.r.t. the feature extractor's parameters $\vtheta_F$, we have to compute (pseudo) gradients of the latent adversarial examples $\hat{\vz}$ w.r.t. the box bounds $\underline{\vz}$ and $\overline{\vz}$. As these gradients are not well defined, we introduce the \emph{gradient connector}, discussed next, as an interface between the feature extractor and classifier, imposing such pseudo gradients.
This allows us to train $\vf_E$ and $\vf_C$ \emph{jointly}, leading to a feature extractor that minimizes approximation errors and a classifier that is resilient to the spurious points included in the remaining approximation errors.

\subsection{Gradient Connector} \label{sec:connector}
The key function of the gradient connector is to enable gradient computation through the adversarial example search in the embedding space. Using the chain rule, this only requires us to define the (pseudo) gradients $\frac{\partial \hat{\vz}}{\partial \underline{\vz}}$ and $\frac{\partial \hat{\vz}}{\partial \overline{\vz}}$ of the latent adversarial examples $\hat{\vz}$ w.r.t. the box bounds  $\underline{\vz}$ and $\overline{\vz}$ on the feature extractor's outputs. Below, we will focus on the $i$\th dimension of the lower box bound $\underline{z}_i$ and note that all other dimensions and the upper bounds follow analogously.
%

As the latent adversarial examples can be seen as multivariate functions in the box bounds, we obtain the general form $\frac{d \L}{d \underline{z}_i} = \sum_j \frac{d \L}{d \hat{z}_j} \frac{\partial \hat{z}_j}{\partial \underline{z}_i}$, depending on all dimensions of the latent adversarial example. 
We now consider a single \pgd step and observe that bounds in the $i$\th dimension have no impact on the $j$\th coordinate of the resulting adversarial example as they impact neither the gradient sign nor the projection in this dimension, as \boxd bounds are axis parallel.
We thus assume independence of the $j$\th dimension of the latent adversarial example $\hat{z}_j$ from the bounds in the $i$\th dimension $\underline{z}_i$ and $\overline{z}_i$ (for $i \neq j$), which holds rigorously (up to initialization) for a single step attack and constitutes a mild assumption for multi-step attacks.
Therefore, we have $\frac{\partial \hat{z}_j}{\partial \underline{z}_i} = 0$ for $i \ne j$ and obtain $\frac{d \L}{d \underline{z}_i} =\frac{d \L}{d \hat{z}_i} \frac{\partial \hat{z}_i}{\partial \underline{z}_i}$, leaving only $\frac{\partial \hat{z}_i}{\partial \underline{z}_i}$ for us to define.

The most natural gradient connector is the \emph{binary connector}, \ie, set $\frac{\partial \hat{z}_i}{\partial \underline{z}_i} = 1$ when $\hat{z}_i = \underline{z}_i$ and $0$ otherwise, as it is a valid sub-gradient for the projection operation in \pgd. However, the latent adversarial input often does not lie on a corner (extremal vertex) of the \boxd approximation, leading to sparse gradients and thus a less well-behaved optimization problem. 
More importantly, the binary connector is very sensitive to the distance between (local) loss extrema and the box boundary and thus inherently ill-suited to gradient-based optimization. For example, a local extremum at $\hat{z}_i$ would induce $\frac{\partial \hat{z}_i}{\partial \underline{z}_i} = 1$ in the box $[\hat{z}_i, 0]$, but $\frac{\partial \hat{z}_i}{\partial \underline{z}_i} = 0$ for $[\hat{z}_i-\epsilon, 0]$, even for arbitrarily small $\epsilon$.

\begin{wrapfigure}[9]{r}{0.43 \textwidth}
    \centering
    \vspace{4mm}
    \hspace{-12mm}
    \begin{tikzpicture}
    \def\x{4.5}
    \def\y{1.5}
    \def\c{0.3}
    \def\dy{0.075}
    \def\dx{0.1}
    \pgfmathsetmacro\cc{1-\c}

    \node[pane, minimum width=5.7cm, minimum height=3.0cm] (input_box) at (0.6*\x, 0.68*\y) {};
    \node[anchor = south, align=center] at (input_box.south) {Coordinate $\hat{z}$};
    
    \centering
    \begin{tikzpicture} 
        \draw[-, line width = 0.5pt] (0, 0) -- (\x, 0);
        \draw[-, line width = 0.5pt] (0, 0) -- (0, \y);

        \draw[-] ($(0,-\dx)$) -- ($(0,\dx)$);
        \node[anchor=north, align=center, scale=0.85] at ($(0,-\dx-0.05)$) {$\underline{z}$};
        \draw[-] ($(\c*\x,-\dx)$) -- ($(\c*\x,\dx)$);
        \node[anchor=north, align=center, scale=0.85] at ($(\c*\x,-\dx)$) {$\underline{z} + c (\overline{z} - \underline{z})$};
        \draw[-] ($(\cc*\x,-\dx)$) -- ($(\cc*\x,\dx)$);
        \node[anchor=north, align=center, scale=0.85] at ($(\cc*\x,-\dx)$) {$\overline{z} - c (\overline{z} - \underline{z})$};
        \draw[-] ($(\x,-\dx)$) -- ($(\x,\dx)$);
        \node[anchor=north, align=center, scale=0.85] at ($(\x,-\dx-0.05)$) {$\overline{z}$};

        \draw[-] ($(-\dy,0*\y)$) -- ($(\dy,0*\y)$);
        \node[anchor=east, align=center, scale=0.85] at ($(-\dy, 0*\y)$) {$0.0$};
        \draw[-] ($(-\dy,0.5*\y)$) -- ($(\dy,0.5*\y)$);
        \node[anchor=east, align=center, scale=0.85] at ($(-\dy, 0.5*\y)$) {$0.5$};
        \draw[-] ($(-\dy,1*\y)$) -- ($(\dy,1*\y)$);
        \node[anchor=east, align=center, scale=0.85] at ($(-\dy, 1*\y)$) {$1.0$};

        \draw[-,my-full-blue!90, line width = 1.5pt, opacity=0.65] (0., \y) -- node[pos=0.5, above right=-0.1cm and -0.1cm, scale=1.3, black] {$\frac{\partial \hat{z}}{\partial \overline{z}}$}  ($(\c*\x, 0)$) -- (\x, 0);
        \draw[-,my-full-orange!90, line width = 1.5pt, opacity=0.65] (0, 0) -- ($(\cc*\x, 0)$) -- node[pos=0.5,  above left=-0.17cm and -0.1cm, scale=1.3, black] {$\frac{\partial \hat{z}}{\partial \underline{z}}$} (\x, \y) ;
        
    \end{tikzpicture}

\end{tikzpicture}    
    \vspace{-3mm}
    \caption{Gradient connector visualization.} 
    \label{softlinear}
\end{wrapfigure}

To alleviate both of these problems, we consider a \emph{linear connector}, \ie, set $\frac{\partial \hat{z}_i}{\partial \underline{z}_i} = \frac{\overline{z}_i - \hat{z}_i}{\overline{z}_i - \underline{z}_i}$. However, even when our latent adversarial example is very close to one bound, the linear connector would induce non-zero gradients w.r.t. to the opposite bound. To remedy this undesirable behavior, we propose the \emph{rectified linear connector}, setting $\frac{\partial \hat{z}_i}{\partial \underline{z}_i} = \max(0, 1 - \frac{\hat{z}_i - \underline{z}_i}{ c(\overline{z}_i - \underline{z}_i)})$ where $c \in [0, 1]$ is a constant (visualized in \cref{softlinear} for $c=0.3$).
Observe that it recovers the binary connector for $c = 0$ and the linear connector for $c=1$. To prevent gradient sparsity ($c \leq 0.5$) while avoiding the above-mentioned counterintuitive gradient connections ($c \geq 0.5$), we set $c=0.5$ unless indicated otherwise. 
When the upper and lower bounds are identical in the $i$\th dimension, \pgd turns into an identity function. Therefore, we set both gradients to $\frac{\partial \hat{z}_i}{\partial \underline{z}_i}=\frac{\partial \hat{z}_i}{\partial \overline{z}_i} = 0.5$ turning the gradient connector into an identity function for the backward pass.


\begin{wrapfigure}[15]{r}{0.36 \textwidth}
    \centering
    \vspace{-5mm}
    \scalebox{1.10}{
    \begin{tikzpicture}
    
        \node[pane, minimum width=3.74cm, minimum height=2.65cm, anchor=north west] (output_box) {};
    
        \node[below right=-0.1cm and -0.15cm of output_box.north, anchor=north] (output_content) {
        \centering
        \begin{tikzpicture} 
            \draw[-stealth, line width = 0.5pt] (-2.2, 1.2) -- (-0.1, 1.2) node[right,scale=0.85] {$o^\Delta_1$};
            \draw[-stealth, line width = 0.5pt] (-0.7, -0.1) -- (-.7, 1.8) node[above,scale=0.85] {$o^\Delta_2$};

            \draw[exactlatentstyle] (-2.0, 0.1) -- (-1.9, 0.6) -- (-1.8, 1.1) -- (-1.6, 1.6) -- (-1.1, 1.35) -- (-0.8, 1.1) -- (-0.5, 0.5) -- (-0.6, 0.4)  -- (-0.8, 0.1) -- (-1.2, -0.2) -- (-1.5,-0.1)--(-1.7, -0.1);
    
            \draw[-stealth, pgdarrow] (-1.3, 0.1) -- (-0.9, 0.3);
            \draw[-stealth, pgdarrow] (-0.9, 0.3) -- (-0.5, 0.5);
            \node[point, pgdarrow] () at (-0.5, 0.5) {};
    
            \draw[-stealth, pgdarrow] (-1.4, 0.9) -- (-1.7, 1.2);
            \draw[-stealth, pgdarrow] (-1.7, 1.2) -- (-1.6, 1.6);
            \node[point, pgdarrow] () at (-1.6, 1.6) {};

            \draw[-stealth, pgdsinglearrow] (-1.4, 0.6) -- (-1.0, 0.7);
            \draw[-stealth, pgdsinglearrow] (-1.0, 0.7) -- (-0.8, 1.1);
            \node[point, pgdsinglearrow] () at (-0.8, 1.1) {};
    
            \draw[pgdarrow,  dotted, line width =1.1] (-0.5, -0.3) -- (-0.5, 1.6) -- (-2.43, 1.6);
            \draw[pgdsinglearrow, dashed] (-0.8, -0.3) -- (-0.8, 1.1) -- (-2.43, 1.1);

    
            \node[wcpoint] () at (-0.5, 1.6) {};

    

            \draw[black] (-0.5, 1.6) -- (0.0, 1.9) node[right, font=\footnotesize, text=black] {$\bc{L}_{\tool}$};
            \draw[black] (-0.8, 1.1) -- (0.0, 0.6) node[right, font=\footnotesize, text=black] {$\bc{L}^{\text{single}}_{\tool}$};
        \end{tikzpicture} 
        };

    \end{tikzpicture}
    }
    \vspace{-2mm}
    \caption{Illustration of the bounds on $o_i^\Delta \coloneqq o_i - o_t$ obtained via single estimator (\markersinglepgd) and multi-estimator (\markermultipgd) \pgd and the points maximizing the corresponding losses: \markerwcsingle for $\bc{L}^{\text{single}}_{\tool}$ and \markerwc for $\bc{L}_{\tool}$.
    }
    \label{fig:multi_estimator}
    \end{wrapfigure}

\subsection{\tool Loss \& Multi-estimator \pgd} \label{sec:multi-estimator}
The standard \pgd attack, used in adversarial training, henceforth called \emph{single-estimator} \pgd, is based on maximizing the Cross-Entropy loss $\L_{\text{CE}}$ of a single input. In the context of \tool, this results in the overall loss
\begin{equation*}
	\bc{L}^{\text{single}}_{\tool}(\vx, y, \epsilon) = \max_{\hat{\vz} \in [\underline{\vz}, \overline{\vz}]} \ln \Big( 1 + \sum_{i \neq y} \exp(f_C(\hat{\vz})_i - f_C(\hat{\vz})_y) \Big),
\end{equation*}
where the embedding space bounding box $[\underline{\vz}, \overline{\vz}]$ is obtained via \ibp.
However, this loss is not necessarily well aligned with adversarial robustness. Consider the example illustrated in \cref{fig:multi_estimator}, where only points in the lower-left quadrant are classified correctly (\ie, $o_i^\Delta \coloneqq o_i - o_y < 0$). We compute the latent adversarial example $\hat{\vz}$ by conducting a standard adversarial attack on the Cross-Entropy loss over the reachable set \markerexactlatent (optimally for illustration purposes) and observe that the corresponding output $\vf(\hat{\vz})$ (\markerwcsingle) is classified correctly. However, if we instead use the logit differences $o_1^\Delta$ and $o_2^\Delta$ as attack objectives, we obtain two misclassified points (\markeradvpoint). Combining their dimension-wise worst-case bounds (\markermultipgd), we obtain the point \markerwc, which realizes the maximum loss over an optimal box approximation of the reachable set. 
As the correct classification of this point (when computed exactly) directly corresponds to true robustness, we propose the \emph{multi-estimator} \pgd variant of $\bc{L}_{\tool}$, which estimates the upper bounds on the logit differences $o_i^\Delta$ using separate samples and then computes the loss function using the per-dimension worst-cases as:
\begin{equation*}
    \bc{L}_{\tool}(\vx, y, \epsilon) = \ln \Big( 1 + \sum_{i \neq y} \exp\Big( \max_{\hat{\vz} \in [\underline{\vz}, \overline{\vz}]} f_C(\hat{\vz})_i - f_C(\hat{\vz})_y \Big) \Big).
\end{equation*}

\subsection{Training Objective \& Regularization} \label{sec:objective}
While complete certification methods can decide any robustness property, this requires exponential time. Therefore, networks should not only be robust but also certifiable.
Thus, we propose to combine the IBP loss for easy-to-learn and certify samples with the \tool loss for harder samples as follows: 
\begin{equation*}
    \bc{L}(\vx, y, \epsilon) = \bc{L}_\text{\tool}(\vx, y, \epsilon) \cdot \bc{L}_\text{\ibp}(\vx,y, \epsilon).
\end{equation*}
This expresses that every sample should be either certifiable with \tool or \ibp bounds\footnote{See \citet{FischerBDGZV19} for further discussion.}.
Further, as by construction $\bc{L}_\text{\tool} \leq \bc{L}_\text{\ibp}$, we add a scaling term $\alpha$ to the loss gradient: 
\begin{equation*}
	\frac{d\L}{d \vtheta} \coloneqq 
	2\alpha \frac{d\L_\text{\tool}}{d \vtheta} \cdot \L_\text{\ibp}+
	(2-2\alpha) \frac{d\L_\text{\ibp}}{d \vtheta} \cdot \L_\text{\tool}.
\end{equation*}
Here, $\alpha = 0.5$ recovers the standard gradient, obtained via the product rule (both sides weighted with $1$), while $\alpha = 0$ and $\alpha = 1$ correspond to using only the (weighted) \ibp and \tool gradients, respectively. 
Henceforth, we express this as the regularization weight $w_\text{\tool} = \frac{\alpha}{1-\alpha}$, which intuitively expresses the weight put on \tool, using $w_\text{\tool} = 5$ unless specified otherwise.
Lastly, we reduce the variance of $\L$ by averaging $\L_\text{\ibp}$ and $\L_\text{\tool}$ over a mini batch before multiplying (see \cref{app:proof_ibp_reg}).

\subsection{\toolp ~-- Balancing Regularization by Combining \tool with \sabr} \label{sec:STAPS}
Recall that \sabr \citep{MuellerEFV22} reduces the over-regularization of certified training by propagating a small, adversarially selected \boxd through the whole network. However, as \boxd approximations grow exponentially with depth \citep{MuellerEFV22,ShiWZYH21,MaoMFV23}, regardless of the input region size, \sabr has to strike a balance between regularizing early layers too little and later layers too much.
In contrast, \tool's approach of propagating the full input region through the first part of the network (the feature extractor) before using \pgd for the remainder reduces regularization only in later layers. Thus, we propose \toolp by replacing the \ibp components of \tool, for both propagation and regularization, with \sabr to obtain a more uniform reduction of over-regularization throughout the whole network.

\toolp, identically to \sabr, first conducts a PGD attack over the whole network to find an adversarial example $\vx' \in \B(\vx', \epsilon - \tau)$.
Then, it propagates \boxd-bounds for a small region $\B(\vx', \tau) \subset \B(\vx, \epsilon)$ (with $\tau < \epsilon$) around this adversarial example $\vx'$ through the feature extractor, before, identically to \tool, conducting an adversarial attack in the resulting latent-space region over the classifier component of the network.

\section{Experimental Evaluation} \label{sec:eval}
In this section, we evaluate \tool empirically, first, comparing it to a range of state-of-the-art certified training methods, before conducting an extensive ablation study validating our design choices.

\paragraph{Experimental Setup}
We implement \tool in PyTorch \citep{PaszkeGMLBCKLGA19} and use \mnbab \citep{FerrariMJV22} for certification. 
We conduct experiments on \mnist \citep{lecun2010mnist}, \cifar \citep{krizhevsky2009learning}, and \TIN \citep{Le2015TinyIV} using $\ell_\infty$ perturbations and the \cnns architecture \citep{GowalIBP2018}. For more experimental details including hyperparameters and computational costs and an extended analysis see \cref{sec:exp_setting} and \cref{app:extended_exp}, respectively.



\begin{table*}[t]
	\centering
    \caption{Comparison of natural (Nat.) and certified (Cert.) accuracy on the full \mnist, \cifar, and \TIN test sets. We report results for other methods from the relevant literature.}
    \vspace{-1mm}
	\renewcommand{\arraystretch}{1.0}
	\begin{adjustbox}{width=0.75\linewidth,center}
		\begin{threeparttable}
			\begin{tabular}{@{}lclccc@{}}
				\toprule
				Dataset & $\epsilon_\infty$ & Training Method & Source & Nat. [\%]  & Cert. [\%] \\
				\midrule
				\multirow{12}*{\mnist}&\multirow{6}*{0.1}&\colt &\citet{BalunovicV20}     & 99.2  & 97.1 \\
				&&\ibp &\citet{ShiWZYH21}  & 98.84 & 97.95 \\
				&&\sortnet &\citet{ZhangJHW22}  & 99.01 & 98.14 \\
				&&\sabr &\citet{MuellerEFV22}            & \textbf{99.23} & 98.22  \\
				&&\tool  & this work & 99.19 & \textbf{98.39} \\
				&&\toolp  & this work & 99.15 & 98.37 \\

				\cmidrule(rl){2-6}
				&\multirow{6}*{0.3}                  &\colt &\citet{BalunovicV20}        & 97.3  & 85.7\\
				&&\ibp &\citet{ShiWZYH21}    & 97.67 & 93.10 \\ 
				&&\sortnet &\citet{ZhangJHW22}  & 98.46 & 93.40 \\
				&&\sabr &\citet{MuellerEFV22}              & \textbf{98.75} & 93.40 \\
				&&\tool  & this work & 97.94 & \textbf{93.62} \\
				&&\toolp  & this work & 98.53 & 93.51 \\

				\cmidrule(rl){1-6}
				\multirow{16}*{\cifar} &\multirow{7}*{$\displaystyle \frac{2}{255}$} &\colt &\citet{BalunovicV20}                  & 78.4 & 60.5 \\
				&&\ibp &\citet{ShiWZYH21}              & 66.84 & 52.85 \\
				&&\sortnet &\citet{ZhangJHW22}  & 67.72 & 56.94 \\
				&&\ibpr&\citet{PalmaIBPR22}  & 78.19 & 61.97\\
				&&\sabr &\citet{MuellerEFV22}         & 79.24 & 62.84\\
				&&\tool  & this work & 75.09 & 61.56 \\
				&&\toolp  & this work & \textbf{79.76} & \textbf{62.98} \\

				\cmidrule(rl){2-6}
				&\multirow{7}*{$\displaystyle \frac{8}{255}$}&\colt  &\citet{BalunovicV20}                        & 51.7 & 27.5 \\
				&&\ibp &\citet{ShiWZYH21}                     & 48.94 & 34.97 \\ 
				&&\sortnet &\citet{ZhangJHW22}  & \textbf{54.84} & \textbf{40.39} \\
				&&\ibpr &\citet{PalmaIBPR22}      &51.43 & 27.87\\
				&&\sabr &\citet{MuellerEFV22}                             & 52.38 & 35.13\\ 
				&&\tool  & this work & 49.76 & 35.10 \\
				&&\toolp  & this work & 52.82 & 34.65 \\

				\cmidrule(rl){1-6}
				\multirow{5}*{\TIN}&\multirow{5}*{$\displaystyle \frac{1}{255}$}& 
				\ibp &\citet{ShiWZYH21} & 25.92 &17.87 \\
				&&\sortnet &\citet{ZhangJHW22}  & 25.69 & 18.18 \\
				&& \sabr &\citet{MuellerEFV22} & 28.85 & 20.46 \\
				&&\tool  & this work & 28.34 & 20.82 \\
				&&\toolp  & this work & \textbf{28.98} & \textbf{22.16} \\
				\bottomrule
			\end{tabular}
			\label{tab:results}
		\end{threeparttable}
	\end{adjustbox}
	\vspace{-3mm}
\end{table*}

\subsection{Main Results}\label{sec:main_result}
In \cref{tab:results}, we compare \tool to state-of-the-art certified training methods. Most closely related are \ibp, recovered by \tool if the classifier size is zero, and \colt, which also combines bound propagation with adversarial attacks but does not allow for joint training.
\tool dominates \ibp, improving on its certified and natural accuracy in all settings and demonstrating the importance of avoiding over-regularization.
Compared to \colt, \tool improves certified accuracies significantly, highlighting the importance of joint optimization. In some settings, this comes at the cost of slightly reduced natural accuracy, potentially due to \colt's use of the more precise \zono approximations.
Compared to the recent \sabr and \ibpr, \tool often achieves higher certified accuracies at the cost of slightly reduced natural accuracies. Reducing regularization more uniformly with \toolp achieves higher certified accuracies in almost all settings and better natural accuracies in many, further highlighting the orthogonality of \tool and \sabr. Most notably, \toolp increases certified accuracy on \TIN by almost $10\%$ while also improving natural accuracy.
\sortnet, a generalization of a range of recent architectures \citep{ZhangCLHW21,ZhangJH022,AnilLG19}, introducing novel activation functions tailored to yield networks with high $\ell_\infty$-robustness, performs well on \cifar at $\epsilon = 8/255$, but is dominated by \toolp in every other setting.

\subsection{Ablation Study}\label{sec:ablation}

\begin{wrapfigure}[20]{r}{0.48 \textwidth}
    \vspace{-4.5mm}
    \centering
        \centering
        \includegraphics[width=0.93\linewidth]{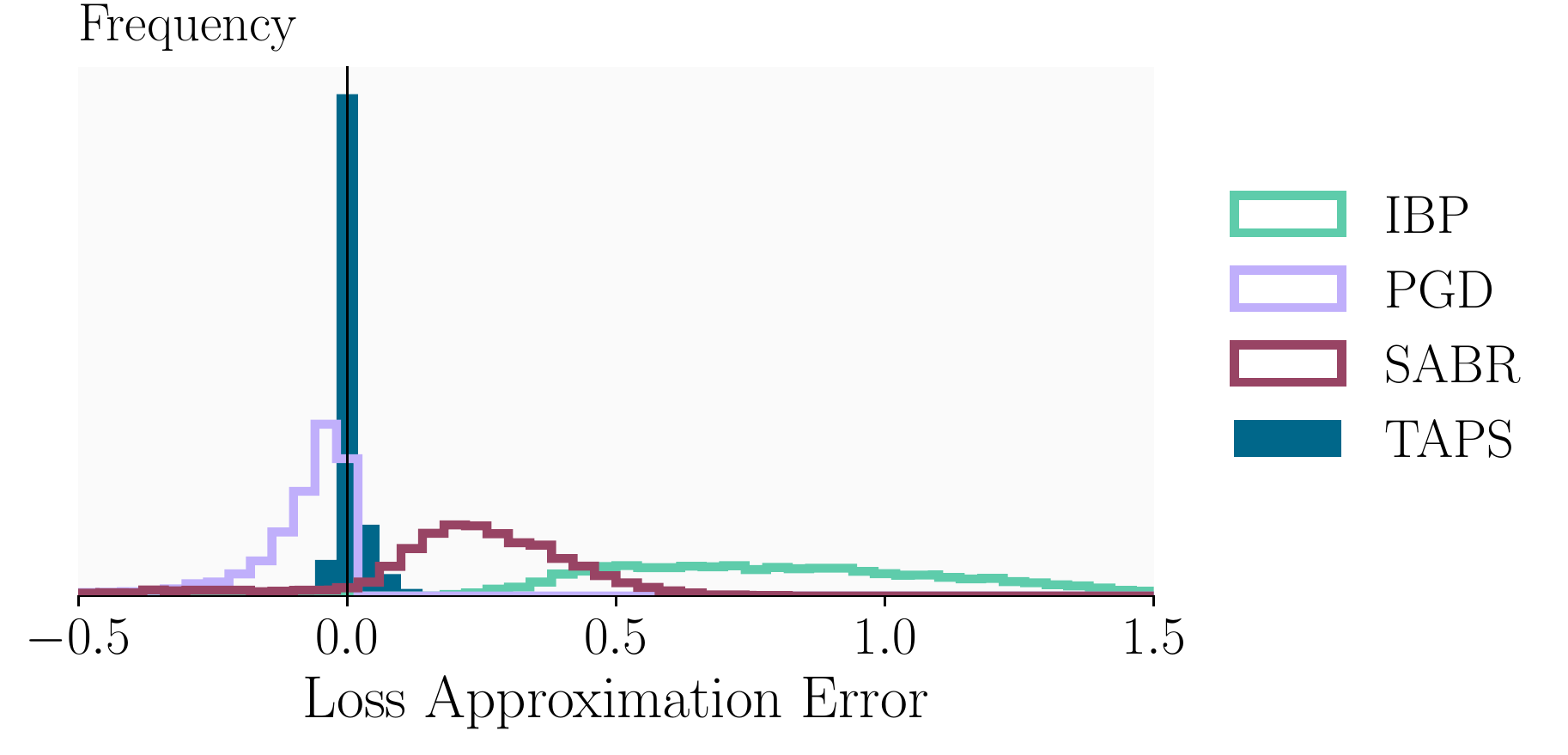}
    \vspace{-2mm}
    \caption{Distribution of the worst-case loss approximation errors over test set samples.
    }
    \label{fig:empirical_bound_tightness}
    \vspace{1mm}
    \begin{subfigure}[b]{0.41\linewidth}
        \centering
    \includegraphics[width=1.05\linewidth]{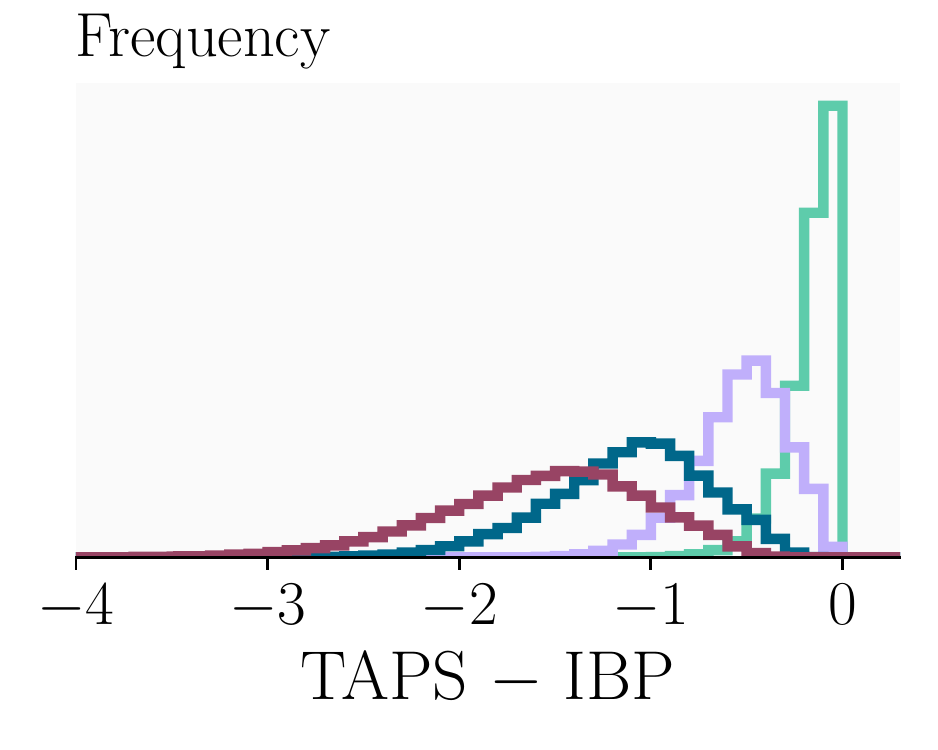}
    \vspace{-6mm}
    \caption{\tool}
    \end{subfigure}
    \hfill
    \begin{subfigure}[b]{0.574\linewidth}
        \centering
        \includegraphics[width=1.05\linewidth]{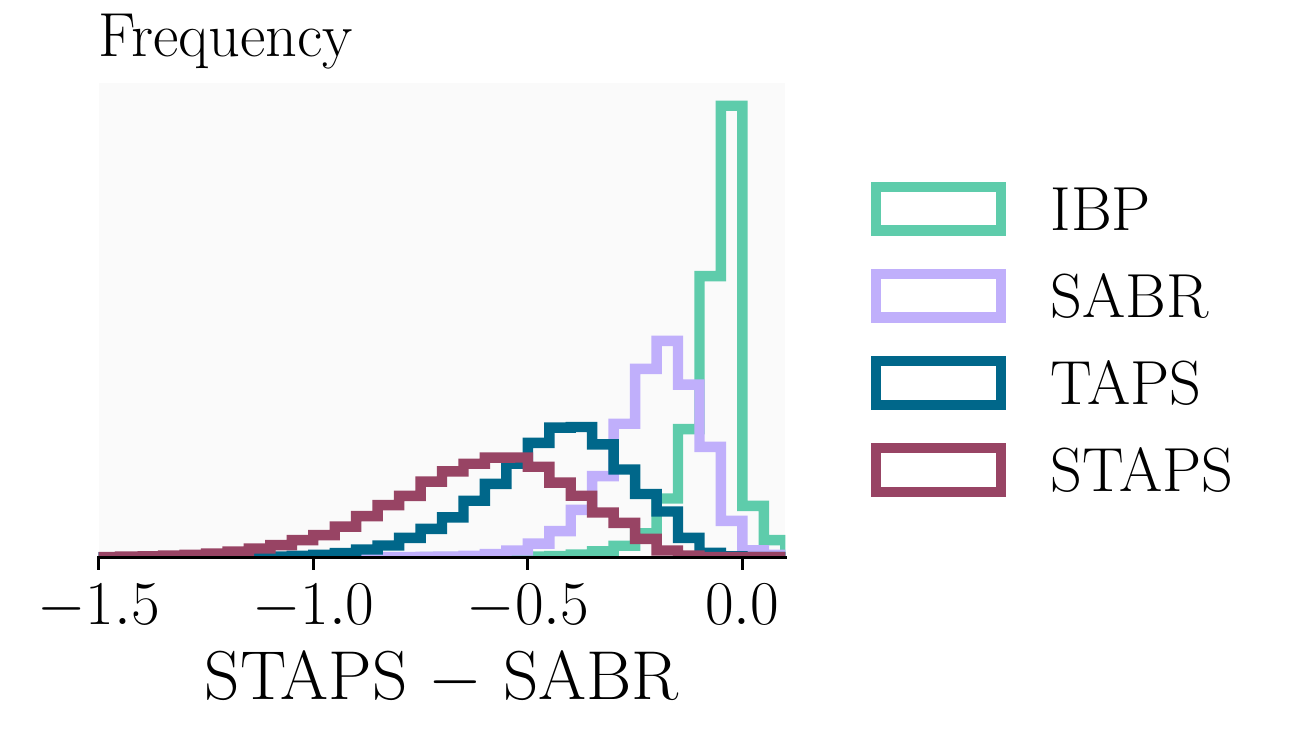}
        \vspace{-6mm}
        \caption{\toolp$\qquad\qquad\quad$}
    \end{subfigure}
    \vspace{-5mm}
    \caption{Bound difference between \ibp and \pgd propagation through the classifier depending on the training method.}
    \label{fig:taps_effect}
\end{wrapfigure}

\paragraph{Approximation Precision}
To evaluate whether \tool yields more precise approximations of the worst-case loss than other certified training methods, we compute approximations of the maximum margin loss with \ibp, \pgd ($50$ steps, $3$ restarts), \sabr ($\lambda=0.4$), and \tool on a small \tool-trained \cnnt for all \mnist test set samples. We report histograms over the difference to the exact worst-case loss computed with a \milp encoding \citep{TjengXT19} in \cref{fig:empirical_bound_tightness}. Positive values correspond to over-approximations while negative values correspond to under-approximation. We observe that the \tool approximation is by far the most precise, achieving the smallest mean and mean absolute error as well as variance. We confirm these observations for other training methods in \cref{fig:empirical_bound_tightness_ext} in \cref{app:extended_exp}.

To isolate the under-approximation effect of the \pgd propagation through the classifier, we visualize the distribution over pairwise bound differences between \tool and \ibp and \toolp and \sabr in \cref{fig:taps_effect} for different training methods. We observe that the distributions for \tool and \toolp are remarkably similar (up to scaling), highlighting the importance of reducing over-regularisation of the later layers, even when propagating only small regions (\sabr/\toolp). Further, we note that larger bound differences indicate reduced regularisation of the later network layers. We thus observe that \sabr still induces a much stronger regularisation of the later layers than \tool and especially \toolp, again highlighting the complementarity of \tool and \sabr, discussed in \cref{sec:STAPS}.


\begin{wraptable}[14]{r}{0.55 \textwidth}
	\centering
	\vspace{-4mm}
    \caption{Effect of \ibp regularization and the \tool gradient expanding coefficient $\alpha$ for MNIST $\epsilon=0.3$.}\label{tab:ibp_reg}
    \vspace{-1mm}
	\renewcommand{\arraystretch}{0.95}
	\begin{adjustbox}{width=0.95\linewidth,center}
		\begin{threeparttable}
		\begin{tabular}{@{}ccccccc@{}}
                \toprule
		$w_{\text{\tool}}$           & {Avg time (s)} & Nat (\%) & {Adv. (\%)} & {Cert. (\%)} \\ 
                \midrule
		{$\L_{\text{\ibp}}$}                     & 2.3     & 97.6 & 93.37 & 93.15  \\
		{0}                & 2.7     & 97.37 & 93.32 & 93.06  \\
		{1} $\,$                          & 4.5     & 97.86 & 93.80 & 93.36  \\
		{5} $\,$                          & 6.9     & 97.94 & 94.01 & \textbf{93.62}      \\
		{10}$\;\,$                          & 15.7    & 98.25 & 94.43 & 93.02  \\
		{15}$^\dagger$                      & 42.8    & 98.53 & \textbf{95.00}     & 91.55  \\
		{20}$^\dagger$                      & 73.7    & \textbf{98.75} & 94.33 & 82.67  \\
		{$\infty$}$^\dagger$& 569.7   & 98.0 & 94.00 & 45.00  \\ 
		{$\L_{\text{\tool}}$}$^\dagger$& 817.1   & 98.5 & 94.50 & 17.50  \\
                \bottomrule
            \end{tabular}
            \begin{tablenotes}
                \footnotesize
                   \item[$\dagger$] Only evaluated on part of the test set within a 2-day time limit.
            \end{tablenotes}
        \end{threeparttable}
    \end{adjustbox}
\end{wraptable}

\paragraph{IBP Regularization}
To analyze the effectiveness of the multiplicative \ibp regularization discussed in \cref{sec:objective}, we train with \ibp in isolation ($\L_{\text{\ibp}}$), \ibp with \tool  weighted gradients ($w_\text{\tool}=0$), varying levels of gradient scaling for the \tool component ($w_\text{\tool} \in [1, 20]$), \tool with IBP weighting ($w_\text{\tool} = \infty$), and \tool loss in isolation, reporting results in \cref{tab:ibp_reg}.   
We observe that \ibp in isolation yields comparatively low standard but moderate certified accuracies with fast certification times. Increasing the weight $w_\text{\tool}$ of the \tool gradients reduces regularization, leading to longer certification times and higher standard accuracies. Initially, this translates to higher adversarial and certified accuracies, peaking at $w_\text{\tool}=15$ and $w_\text{\tool}=5$, respectively, before especially certified accuracy decreases as regularization becomes insufficient for certification. We confirm these trends for \TIN in \cref{tab:ibp_reg_tin} in \cref{app:extended_exp}.

\begin{wrapfigure}[15]{r}{0.70 \textwidth}
    \vspace{-4mm}
    \centering
    \begin{subfigure}[b]{0.33\linewidth}
            \centering
        \includegraphics[width=1.05\linewidth]{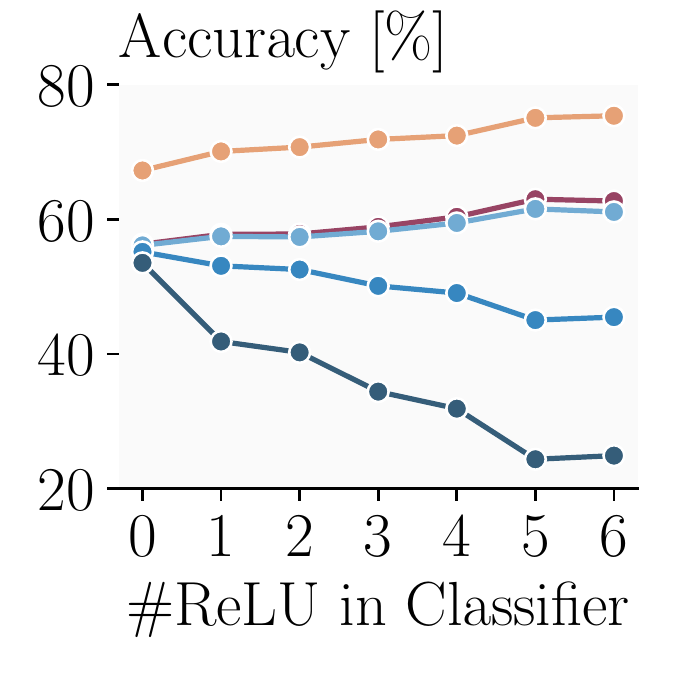}
        \vspace{-6mm}
        \caption{$\epsilon =2/255$}
    \end{subfigure}
    \hfill
    \begin{subfigure}[b]{0.65\linewidth}
        \centering
        \includegraphics[width=1.05\linewidth]{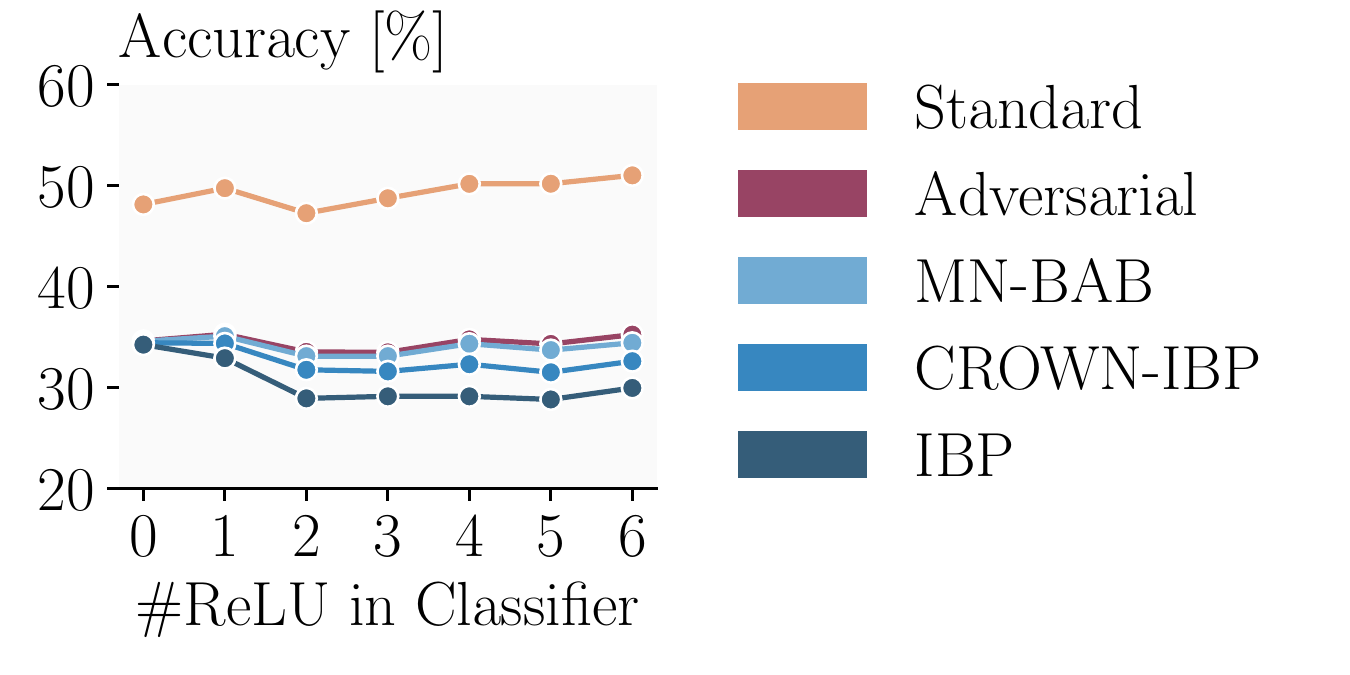}
        \vspace{-6mm}
        \caption{$\epsilon = 8/255 \qquad \qquad \qquad\;\;$}
        \label{fig:split_location_8}
    \end{subfigure}
    \vspace{-1mm}
    \caption{Effect of split location on the standard and robust accuracy of \tool trained networks, depending on the perturbation magnitude $\epsilon$ for different certification methods for \cifar. $0$ ReLUs in the classifier recovers \ibp training.}
    \label{fig:split_location}
    \vspace{1.2mm}
\end{wrapfigure}

\paragraph{Split Location} \label{sec:split}
\tool splits a given network into a feature extractor and classifier, which are then approximated using \ibp and \pgd, respectively. As \ibp propagation accumulates over-approximation errors while \pgd is an under-approximation, the location of this split has a strong impact on the regularization level induced by \tool.
To analyze this effect, we train multiple \cnnss such that we obtain classifier components with between $0$ and $6$ (all) ReLU layers and illustrate the resulting standard, adversarial, and certified (using different methods) accuracies in \cref{fig:split_location} for \cifar, and in \cref{app:extended_exp} for \mnist and \TIN in \cref{tab:classifier_size_mnist,tab:classifier_size_tin} respectively.


For small perturbations ($\epsilon=2/255$), increasing classifier size and thus decreasing regularization yields increasing natural and adversarial accuracy. While the precise \mnbab verification can translate this to rising certified accuracies up to large classifier sizes, regularization quickly becomes insufficient for the less precise \ibp and \crownibp certification. 
For larger perturbations ($\epsilon=8/255$), the behavior is more complex. An initial increase of all accuracies with classifier size is followed by a sudden drop and slow recovery, with certified accuracies remaining below the level achieved for 1 ReLU layer.
We hypothesize that this effect is due to the \ibp regularization starting to dominate optimization combined with increased training difficulty (see \cref{app:extended_exp} for details).
For both perturbation magnitudes, gains in certified accuracy can only be realized with the precise \mnbab certification \citep{MuellerEFV22}, highlighting the importance of recent developments in neural network verification for certified training.

\begin{wrapfigure}[10]{r}{0.43 \textwidth}
    \centering
    \vspace{-0mm}
    \includegraphics[width=.49\linewidth]{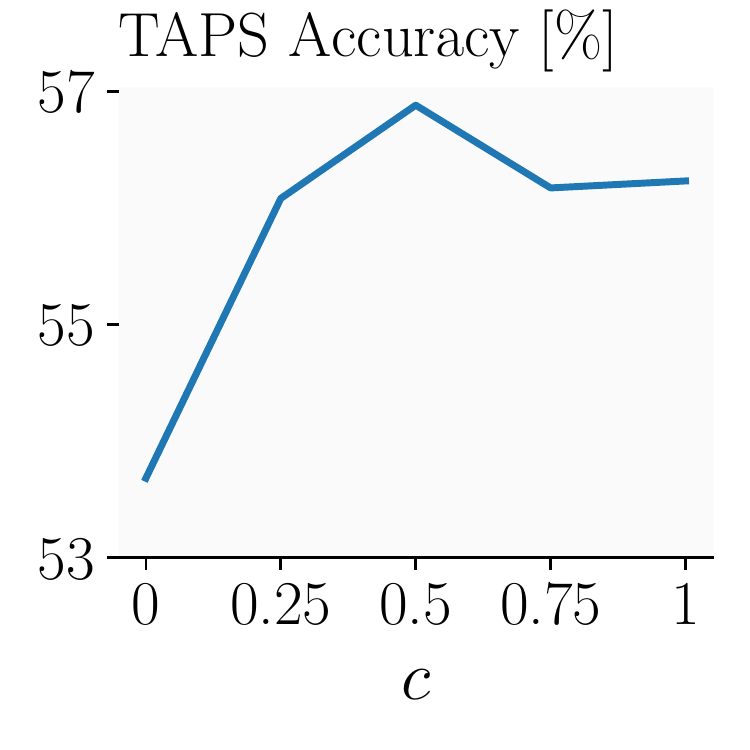}
    \hfil
    \includegraphics[width=.49\linewidth]{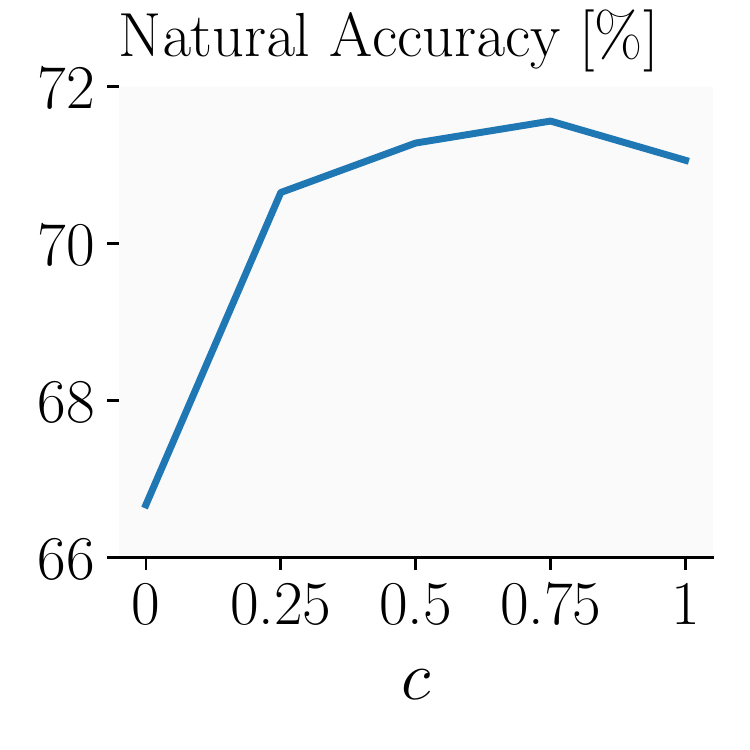}
    \vspace{-7mm}
    \caption{Effect of the gradient connector on \tool (left) and natural (right) accuracy.} \label{fig:grad_connector_ablation}
\end{wrapfigure}
\paragraph{Gradient Connector} In \cref{fig:grad_connector_ablation}, we illustrate the effect of our gradient connector's parameterization c (\cref{sec:connector}). We report \tool accuracy (the portion of samples where all latent adversarial examples are classified correctly) as a proxy for the goodness of fit. Recall that $c=0$ corresponds to the binary connector and $c=1$ to the linear connector. We observe that the binary connector achieves poor \tool and natural accuracy, indicating a less well-behaved optimization problem. \tool accuracy peaks at $c=0.5$, indicating high goodness-of-fit and thus a well-behaved optimization problem. This agrees well with our theoretical considerations aiming to avoid sparsity ($c<0.5$) and contradicting gradients ($c>0.5$).

\begin{wraptable}[11]{r}{0.49\textwidth}
    \centering
    \vspace{-4.9mm}
    \caption{Comparison of single- and multi-estimator \pgd, depending on the split position for \mnist at $\eps=0.3$.} \label{tb:single_multi}
    \vspace{-1mm}
	\begin{adjustbox}{width=0.95\linewidth,center}
		\begin{threeparttable}
            \begin{tabular}{ccccc}
            \toprule
            \multirow{2.4}{*}{\makecell{\# $\relu$ in\\Classifier}} & \multicolumn{2}{c}{Single}      & \multicolumn{2}{c}{Multi}    \\
            \cmidrule(rl){2-3}				\cmidrule(rl){4-5}
                                     & Certified      & Natural        & Certified      & Natural        \\ \midrule
            1                             & -$^\dagger$     & 31.47$^\dagger$ & \textbf{93.62} & 97.94 \\
            3                            & 92.91 & 98.56 & 93.03 & 98.63 \\
            6                            & 92.41 & \textbf{98.88} & 92.70 & \textbf{98.88} \\ \bottomrule
            \end{tabular}
            \begin{tablenotes}
                \footnotesize
                \item[$\dagger$] Training encounters mode collapse. Last epoch performance reported.
            \end{tablenotes}
        \end{threeparttable}
    \end{adjustbox}
\end{wraptable}

\paragraph{Single-Estimator vs Multi-Estimator PGD}
To evaluate the importance of our multi-estimator \pgd variant, we compare it to single-estimator \pgd across a range of split positions, reporting results in  \cref{tb:single_multi}. We observe that across all split positions, multi-estimator \pgd achieves better certified and better or equal natural accuracy. Further, training collapses reproducibly for single-estimator \pgd for small classifiers, indicating that multi-estimator \pgd additionally improves training stability.

\begin{wraptable}[9]{r}{0.49\textwidth}
    \centering
    \vspace{-4.7mm}
    \caption{Effect of different \pgd attack strengths for \mnist at $\eps=0.3$.} \label{tb:attack_strength}
    \vspace{-1mm}
	\begin{adjustbox}{width=0.95\linewidth,center}
		\begin{threeparttable}
            \begin{tabular}{ccccc}
            \toprule
            \multirow{2.4}{*}{\makecell{\# Attack Steps}} & \multicolumn{2}{c}{1 Restart}      & \multicolumn{2}{c}{3 Restarts}    \\
            \cmidrule(rl){2-3}				\cmidrule(rl){4-5}
                                        & Certified      & Natural        & Certified      & Natural        \\ \midrule
            1                           & 93.36 & \textbf{98.22} & 93.47 & \textbf{98.22} \\
            5                           & 93.15 & 97.90 & \textbf{93.55} & 97.90 \\
            20                          & \textbf{93.62} & 97.94 & 93.52 & 97.99 \\
            100                         & 93.46 & 97.94 & \textbf{93.55} & 97.99 \\ 
            \bottomrule
            \end{tabular}
        \end{threeparttable}
    \end{adjustbox}
\end{wraptable}

\paragraph{PGD Attack Strength}
To investigate the effect of the adversarial attack's strength, we use $1$ or $3$ restarts and vary the number of attack steps used in \tool from $1$ to $100$ for \mnist at $\eps=0.3$, reporting results in \cref{tb:attack_strength}. 
Interestingly, even a single attack step and restart are sufficient to achieve good performance and outperform \ibp. 
As we increase the strength of the attack, we can increase certified accuracy slightly while marginally reducing natural accuracy, agreeing well with our expectation of regularization strength increasing with attack strength.


\section{Related Work} \label{sec:related_work}

\paragraph{Verification Methods} In this work, we only consider deterministic verification methods, which analyze a given network as is. While \emph{complete} (or \emph{exact}) methods \citep{TjengXT19,WangZXLJHK21,ZhangWXLLJ22,FerrariMJV22} can decide any robustness property given enough time, \emph{incomplete} methods \citep{SinghGMPV18,RaghunathanSL18,ZhangWCHD18,DathathriDKRUBS20,MullerMSPV22} sacrifice some precision for better scalability. However, recent complete methods can be used with a timeout to obtain effective incomplete methods.

\paragraph{Certified Training} 
Most certified training methods compute and minimize sound over-approximations of the worst-case loss using different approximation methods: \diffai \citep{MirmanGV18} and \ibp \citep{GowalIBP2018} use \boxd approximations, \citet{WongSMK18} use \deepz relaxations \citep{SinghGMPV18}, \citet{WongK18} back-substitute linear bounds using fixed relaxations, \citet{ZhangCXGSLBH20} use dynamic relaxations \citep{ZhangWCHD18,SinghGPV19} and compute intermediate bounds using \boxd relaxations.
\citet{ShiWZYH21} significantly shorten training schedules by combining \ibp training with a special initialization.
Some more recent methods instead compute and optimize more precise, but not necessarily sound, worst-case loss approximations:
\sabr \citep{MuellerEFV22} reduce the regularization of \ibp training by propagating only small but carefully selected subregions.
\ibpr \citep{PalmaIBPR22} combines adversarial training at large perturbation radii with an \ibp-based regularization. 
\colt \citep{BalunovicV20} is conceptually most similar to \tool and thus compared to in more detail below. While prior work combined a robust and a precise network \citep{MuellerBV21,HorvathMFV22}, to trade-off certified and standard accuracy, these unsound certified training methods can often increase both.

\colt \citep{BalunovicV20}, similar to \tool, splits the network into a feature extractor and classifier, computing bounds on the feature extractor's output (using the \zono \citep{SinghGPV19} instead of \boxd domain) before conducting adversarial training over the resulting region. Crucially, however, \colt lacks a gradient connector and, thus, does not enable gradient flow between the latent adversarial examples and the bounds on the feature extractor's output. Therefore, gradients can only be computed for the weights of the classifier but not the feature extractor, preventing the two components from being trained jointly. Instead, a stagewise training process is used, where the split between feature extractor and classifier gradually moves through the network starting with the whole network being treated as the classifier. This has several repercussions: not only is the training very slow and limited to relatively small networks (a four-layer network takes almost 2 days to train) but more importantly, the feature extractor (and thus the whole network) is never trained specifically for precise bound propagation. Instead, only the classifier is trained to become robust to the incurred imprecisions. As this makes bound propagation methods ineffective for certification, \citet{BalunovicV20} employ precise but very expensive mixed integer linear programming (MILP \citep{TjengXT19}), further limiting the scalability of \colt.



In our experimental evaluation (\cref{sec:main_result}), we compare \tool in detail to the above methods.

\paragraph{Robustness by Construction}
\citet{LiCWC19}, \citet{LecuyerAG0J19}, and \citet{CohenRK19} construct probabilistic classifiers by introducing randomness into the inference process of a base classifier. This allows them to derive robustness guarantees with high probability at the cost of significant (100x) runtime penalties. \citet{SalmanLRZZBY19} train the base classifier using adversarial training and \citet{HorvathMFV22A} ensemble multiple base models to improve accuracies at a further runtime penalty.
\citet{ZhangCLHW21,ZhangJH022} introduce $\ell_\infty$-distance neurons, generalized to \sortnet by \citet{ZhangJHW22} which inherently exhibits $\ell_\infty$-Lipschitzness properties, yielding good robustness for large perturbation radii, but poor performance for smaller ones.

\section{Conclusion}

We propose \tool, a novel certified training method that reduces over-regularization by constructing and optimizing a precise worst-case loss approximation based on a combination of \ibp and \pgd training. 
Crucially, \tool enables joint training over the \ibp and \pgd approximated components by introducing the gradient connector to define a gradient flow through their interface. 
Empirically, we confirm that \tool yields much more precise approximations of the worst-case loss than existing methods and demonstrate that this translates to state-of-the-art performance in certified training in many settings.

\section*{Acknowledgements}
We would like to thank our anonymous reviewers for their constructive comments and insightful questions.

This work has been done as part of the EU grant ELSA (European Lighthouse on Secure and Safe AI, grant agreement no. 101070617) and the SERI grant SAFEAI (Certified Safe, Fair and Robust Artificial Intelligence, contract no. MB22.00088). Views and opinions expressed are however those of the authors only and do not necessarily reflect those of the European Union or European Commission. Neither the European Union nor the European Commission can be held responsible for them. 

The work has received funding from the Swiss State Secretariat for Education, Research and Innovation (SERI).

\message{^^JLASTBODYPAGE \thepage^^J}

\clearpage
\bibliography{references}
\bibliographystyle{IEEEtranN}

\message{^^JLASTREFERENCESPAGE \thepage^^J}


\ifbool{includeappendix}{%
	\clearpage
	\appendix
	\section{Averaging Multipliers Makes Gradients Efficient} \label{app:proof_ibp_reg}

\begin{theorem} \label{thm:ibp_reg}
    Let $x_i$ be i.i.d. drawn from the dataset and define $f_i = f_\theta(x_i)$ and $g_i = g_\theta(x_i)$, where $f_\theta$ and $g_\theta$ are two functions. Further, define $L_1 = (\sum_{i=1}^n \frac{1}{n} f_i)\cdot (\sum_{i=1}^n \frac{1}{n} g_i)$ and $L_2 = \sum_{i=1}^n \frac{1}{n} f_i g_i$. Then, assuming the function value and the gradient are independent, $\E_x \left(\frac{\partial L_1}{\partial \theta}\right) = \E_x \left(\frac{\partial L_2}{\partial \theta}\right)$ and $\Var_x \left(\frac{\partial L_1}{\partial \theta}\right) \le \Var_x \left(\frac{\partial L_2}{\partial \theta}\right)$.
\end{theorem}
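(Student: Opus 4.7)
}
My plan is to differentiate both $L_1$ and $L_2$ via the product rule and then compare the resulting estimators termwise. Writing $A_i = \partial f_i/\partial\theta$, $B_i = g_i$, $C_i = f_i$, $D_i = \partial g_i/\partial\theta$, the chain/product rule gives
\begin{equation*}
\frac{\partial L_1}{\partial \theta} = \Bigl(\tfrac{1}{n}\!\sum_i A_i\Bigr)\Bigl(\tfrac{1}{n}\!\sum_j B_j\Bigr) + \Bigl(\tfrac{1}{n}\!\sum_i C_i\Bigr)\Bigl(\tfrac{1}{n}\!\sum_j D_j\Bigr), \quad \frac{\partial L_2}{\partial \theta} = \tfrac{1}{n}\!\sum_i (A_i B_i + C_i D_i).
\end{equation*}
The stated independence of function values from gradients gives $A_i \perp B_j$ and $C_i \perp D_j$ for every pair $(i,j)$, in particular for $i=j$. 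Combined with the i.i.d.\ assumption across samples this will be the workhorse of both parts.

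\medskip
\noindent\textbf{Equal expectations.} Taking expectations, linearity together with the independence assumption yields $\E[A_iB_j]=\mu_A\mu_B$ and $\E[C_iD_j]=\mu_C\mu_D$ for \emph{all} pairs $(i,j)$, where $\mu_\cdot$ denotes $\E[\cdot]$. Both $\E[\partial L_1/\partial\theta]$ and $\E[\partial L_2/\partial\theta]$ therefore collapse to $\mu_A\mu_B+\mu_C\mu_D$, proving the first claim.

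\medskip
\noindent\textbf{Variance comparison.} The key observation is that under the independence assumption the factorized estimator $\bar A \bar B := (\tfrac{1}{n}\sum_i A_i)(\tfrac{1}{n}\sum_j B_j)$ can be viewed as a V-statistic $\tfrac{1}{n^2}\sum_{i,j}A_iB_j$, in which the $n(n-1)$ off-diagonal pairs $i\neq j$ consist of fully independent $A_i,B_j$, whereas the $L_2$ estimator retains only the $n$ diagonal pairs. Setting $\sigma_A^2,\sigma_B^2,\sigma_C^2,\sigma_D^2$ for the respective variances, a direct second-moment computation gives
\begin{equation*}
\Var(\bar A \bar B) = \tfrac{\sigma_A^2\sigma_B^2}{n^2} + \tfrac{\sigma_A^2\mu_B^2 + \mu_A^2\sigma_B^2}{n}, \qquad \Var\!\bigl(\tfrac{1}{n}\!\sum_i A_iB_i\bigr) = \tfrac{\sigma_A^2\sigma_B^2 + \sigma_A^2\mu_B^2 + \mu_A^2\sigma_B^2}{n},
\end{equation*}
so the former is smaller by $\sigma_A^2\sigma_B^2(n-1)/n^2\geq 0$. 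The analogous bound holds for the $C,D$ term.

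\medskip
\noindent\textbf{Main obstacle: the cross-covariance.} Writing each gradient as the sum of an $AB$-estimator and a $CD$-estimator, the variance decomposition introduces cross-covariance terms $\Cov(\bar A\bar B,\bar C\bar D)$ versus $\Cov(\tfrac{1}{n}\sum_i A_iB_i,\tfrac{1}{n}\sum_i C_iD_i)$, and the theorem's stated ``value/gradient independence'' does not obviously control joint moments such as $\E[A_iC_j]$ or $\E[A_iD_j]$. I would handle this by expanding each cross-covariance as a sum over index-coincidence patterns $(i,j,k,l)$: the off-diagonal patterns in the $L_1$ version again involve decoupled sample indices and therefore give strictly smaller contributions than in the $L_2$ version, where every summand lives on a single sample. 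Carrying out this bookkeeping pair by pair (and invoking the i.i.d.\ assumption to identify equal-pattern contributions across the two estimators) shows that the cross-covariance of $L_1$ is dominated by that of $L_2$ on every index pattern, so the inequality survives after adding all terms. The conclusion then follows.
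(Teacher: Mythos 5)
Your route is genuinely different from the paper's. The paper conditions on the function values $(f_i,g_i)$, writes $\nabla_\theta f_i=\nabla_\theta f+\eta_i$ and $\nabla_\theta g_i=\nabla_\theta g+\delta_i$ with zero-mean noises, applies the law of total variance, and reduces the inequality to Jensen's bound $(\bar f)^2\le\tfrac1n\sum_i f_i^2$; you instead compute unconditional second moments directly, treating $\bar A\bar B$ as a V-statistic. Your expectation argument and your formulas for the two ``pure'' terms, $\Var(\bar A\bar B)$ versus $\Var(\tfrac1n\sum_i A_iB_i)$, are correct and the surplus $\sigma_A^2\sigma_B^2\,(n-1)/n^2$ (and its $C,D$ analogue) is right.

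The gap is exactly where you flagged it, but your proposed resolution does not work as stated. Let $\rho_{AD}=\Cov(A_1,D_1)$ (covariance of the two per-sample gradients) and $\rho_{BC}=\Cov(B_1,C_1)$ (covariance of the two per-sample values); neither is controlled by the value/gradient independence assumption. The index-pattern expansion gives $\Cov(\bar A\bar B,\bar C\bar D)=\tfrac1n(\mu_A\mu_D\rho_{BC}+\mu_B\mu_C\rho_{AD})+\tfrac{1}{n^2}\rho_{AD}\rho_{BC}$, while $\Cov(\tfrac1n\sum_iA_iB_i,\tfrac1n\sum_iC_iD_i)=\tfrac1n(\mu_A\mu_D\rho_{BC}+\mu_B\mu_C\rho_{AD}+\rho_{AD}\rho_{BC})$, so the cross-covariance difference is $\rho_{AD}\rho_{BC}\,(n-1)/n^2$, which is \emph{negative} whenever $\rho_{AD}$ and $\rho_{BC}$ have opposite signs; the claimed pattern-by-pattern domination is therefore false in general. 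The fix is to stop comparing term by term and add everything up: the total difference is $\tfrac{n-1}{n^2}\left(\sigma_A^2\sigma_B^2+\sigma_C^2\sigma_D^2+2\rho_{AD}\rho_{BC}\right)\ge\tfrac{n-1}{n^2}\left(\sigma_A\sigma_B-\sigma_C\sigma_D\right)^2\ge 0$, using Cauchy--Schwarz ($|\rho_{AD}|\le\sigma_A\sigma_D$, $|\rho_{BC}|\le\sigma_B\sigma_C$) and AM--GM, so the pure-term surplus always absorbs the cross term. With that repair your argument goes through and is in fact more complete than the paper's own proof, which writes the conditional variance of $\bar g\,\tfrac1n\sum_i\eta_i+\bar f\,\tfrac1n\sum_i\delta_i$ as the sum of the two variances, i.e.\ silently drops the same cross term after explicitly allowing $\eta_i$ and $\delta_i$ to be dependent.
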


\begin{proof}
    A famous result in stochastic optimization is that stochastic gradients are unbiased. For completeness, we give a short proof of this property: Let $L = \E_{x} f(x) = \int_{-\infty}^{+\infty} f(x) dP(x)$, thus $\nabla_x L = \nabla_x (\int_{-\infty}^{+\infty} f(x) dP(x)) = \int_{-\infty}^{+\infty} \nabla_x f(x) dP(x) = \E_x (\nabla_x f(x))$. Therefore, $\nabla f(x_i)$ is an unbiased estimator of the true gradient.

    Applying that the stochastic gradients are unbiased, we can write $\nabla_\theta f_i = \nabla_\theta f + \eta_i$, where $\nabla_\theta f$ is the expectation of the gradient and $\eta_i$ is the deviation such that $\E \eta_i = 0$ and $\Var (\eta_i) = \sigma_1^2$. Since $x_i$ is drawn independently, $f_i$ are independent and thus $\eta_i$ are independent. Similarly, we can write $\nabla_\theta g_i = \nabla_\theta g + \delta_i$, where $\E \delta_i = 0$ and $\Var(\delta_i) = \sigma_2^2$. $\eta_i$ and $\delta_i$ may be dependent.

    Define $\bar{f} = \sum_i \frac{1}{n} f_i$ and $\bar{g} = \sum_i \frac{1}{n} g_i$. Explicit computation gives us that
    $
        \nabla L_1 = \bar{g} \cdot \left(\sum_i \frac{1}{n} \nabla f_i\right) + \bar{f} \cdot \left(\sum_i \frac{1}{n} \nabla g_i\right),
    $
    and
    $
        \nabla L_2 = \sum_i \frac{1}{n} \left(f_i \nabla g_i + g_i \nabla f_i\right)
    $. Therefore,
    $$
        \E_x \left(\nabla_\theta L_1 \mid f_i, g_i\right)
        = \bar{g} \nabla_\theta f + \bar{f} \nabla_\theta g
        = \E_x \left(\nabla_\theta L_2 \mid f_i, g_i\right).
    $$
    By the law of total probability,
    \begin{align*}
        \E_x \left(\nabla_\theta L_1 \right)
         & = \E_{f_i, g_i} \left(\E_x \left(\nabla_\theta L_1 \mid f_i, g_i \right) \right) \\
         & = \E_{f_i, g_i} \left(\E_x \left(\nabla_\theta L_2 \mid f_i, g_i \right) \right) \\
         & = \E_x \left(\nabla_\theta L_2 \right).
    \end{align*}
    Therefore, we have got the first result: the gradients of $L_1$ and $L_2$ have the same expectation.

    To prove the variance inequality, we will use variance decomposition formula\footnote{https://en.wikipedia.org/wiki/Law\_of\_total\_variance}:
    \begin{equation*}
        \begin{split}
            \Var_x (\nabla_\theta L_k) =  \E_{f_i, g_i} (\Var_x(\nabla_\theta L_k \mid f_i, g_i)) + \\ \Var_{f_i, g_i} (\E_x(\nabla_\theta L_k \mid f_i, g_i)),
        \end{split}
    \end{equation*}
    $k=1,2$. We have proved that $\E_x(\nabla_\theta L_1 \mid f_i, g_i) = \E_x(\nabla_\theta L_2 \mid f_i, g_i)$, thus the second term is equal. Next, we prove that $\Var_x(\nabla_\theta L_1 \mid f_i, g_i) \le \Var_x(\nabla_\theta L_2 \mid f_i, g_i)$, which implies $\Var_x (\nabla_\theta L_1) \le \Var_x (\nabla_\theta L_2)$.

    By explicit computation, we have
    \begin{align}
         & \Var(\nabla L_1 \mid f_i, g_i) \nonumber                                                                                       \\
         & = (\bar{g})^2 \Var\left(\sum_i \frac{1}{n} \eta_i\right) + (\bar{f})^2 \Var \left(\sum_i \frac{1}{n} \delta_i\right) \nonumber \\
         & = \frac{1}{n} \sigma_1^2 (\bar{g})^2 + \frac{1}{n} \sigma_2^2 (\bar{f})^2, \label{eq:ibp2}
    \end{align}
    and
    \begin{align}
         & \Var(\nabla L_2 \mid f_i, g_i) \nonumber                                                                                                       \\
         & = \Var\left(\sum_i \frac{1}{n} f_i\delta_i\right) + \Var\left(\sum_i \frac{1}{n} g_i\eta_i\right) \nonumber                                    \\
         & = \frac{1}{n} \sigma_1^2 \left(\sum_i \frac{1}{n} g_i^2\right) + \frac{1}{n} \sigma_2^2 \left(\sum_i \frac{1}{n} f_i^2\right). \label{eq:ibp3}
    \end{align}
    Applying Jensen's formula on the convex function $x^2$, we have $\left(\sum_i \frac{1}{n} a_i\right)^2 \le \sum_i \frac{1}{n} a_i^2$ for any $a_i$, thus $(\bar{f})^2 \le \sum_i \frac{1}{n} f_i^2$ and $(\bar{g})^2 \le \sum_i \frac{1}{n} g_i^2$. Combining \cref{eq:ibp2} and \cref{eq:ibp3} with these two inequalities gives the desired result.
\end{proof}

\section{Experiment Details} \label{sec:exp_setting}

\begin{algorithm}[t]
    \caption{Train Loss Computation}
    \label{alg:train}
    \begin{algorithmic}
        \State {\bfseries Input:} data $X_{B} = \{(\vx_b, y_b)\}_b$, current $\epsilon$, target $\epsilon^t$, network $\vf$
        \State {\bfseries Output:} A differentiable loss $L$
        \State $\bc{L}_{\ibp} = \sum_{b \in \mathcal{B}} \bc{L}_\ibp(\vx_b, y_b, \epsilon) / |\mathcal{B}|$.
        \If{$\epsilon < \epsilon^t$}
        \State // $\epsilon$ annealing regularisation from \citet{ShiWZYH21}
        \State $\bc{L}_{\text{fast}} = \lambda \cdot (\bc{L}_{\text{tightness}} + \bc{L}_{\text{relu}})$
        \State {\bfseries return} $\bc{L}_{\text{IBP}} + \epsilon / \epsilon^t \cdot \bc{L}_{\text{fast}}$
        \EndIf
        \State $\bc{L}_{\tool} = \sum_{b \in \mathcal{B}} L_\tool(\vx_b, y_b, \epsilon) / |\mathcal{B}|$.
        \State {\bfseries return} $\bc{L}_{\text{IBP}} \cdot \bc{L}_{\text{TAPS}}$
    \end{algorithmic}
\end{algorithm}

\subsection{TAPS Training Procedure}
To obtain state-of-the-art performance with \ibp, various training techniques have been developed. We use two of them: $\epsilon$-annealing \citep{GowalIBP2018} and initialization and regularization for stable box sizes \citep{ShiWZYH21}. $\epsilon$-annealing slowly increases the perturbation magnitude $\epsilon$ during training to avoid exploding approximation sizes and thus gradients. The initialization of \citet{ShiWZYH21} scales network weights to achieve constant box sizes over network depth.
During the $\epsilon$-annealing phase, we combine the \ibp loss with the ReLU stability regularization $\bc{L}_{\text{fast}}$ \citep{ShiWZYH21}, before switching to the \tool loss as described in \cref{sec:objective}. We formalize this in \cref{alg:train}. We follow \citet{ShiWZYH21} in doing early stopping based on validation set performance. However, we use \tool accuracy (see \cref{app:extended_exp}) instead of \ibp accuracy as a performance metric.

\subsection{Datasets and Augmentation}

We use the \mnist \citep{lecun2010mnist}, \cifar \citep{krizhevsky2009learning}, and \TIN \citep{Le2015TinyIV} datasets, all of which are freely available with no license specified.

The data preprocessing mostly follows \citet{MuellerEFV22}. For \mnist, we do not apply any preprocessing. For \cifar and \TIN, we normalize with the dataset mean and standard deviation (after calculating perturbation size) and augment with random horizontal flips. For \cifar, we apply random cropping to $32 \times 32$ after applying a $2$ pixel padding at every margin. For \TIN, we apply random cropping to $56\times 56$ during training and center cropping during testing.

\subsection{Model Architectures}

Unless specified otherwise, we follow \citet{ShiWZYH21,MuellerEFV22} and use a \cnns with Batch Norm for our main experiments. \cnns is a convolutional network with $7$ convolutional and linear layers. All but the last linear layer are followed by a Batch Norm and ReLU layer. 

\subsection{Training Hyperparameter Details}

We follow the hyperparameter choices of \citet{ShiWZYH21} for $\epsilon$-annealing, learning rate schedules, batch sizes, and gradient clipping (see \cref{tb:epoch}). We set the initial learning rate to 0.0005 and decrease it by a factor of $0.2$ at Decay-1 and -2. We set the gradient clipping threshold to 10.
\begin{table}[t]
    \centering
    \caption{The training epoch and learning rate settings.} \label{tb:epoch}
    \resizebox{0.85\linewidth}{!}{
    \begin{tabular}{ccS[table-format=3]cS[table-format=3]S[table-format=3]}
        \toprule
        Dataset & \text{Batch size} & \text{Total epochs} & \text{Annealing epochs} & \text{Decay-1} & \text{Decay-2} \\
        \midrule
        \mnist  & 256               & 70                  & 20                      & 50             & 60             \\
        \cifar  & 128               & 160                 & 80                      & 120            & 140            \\
        \TIN    & 128               & 80                  & 20                      & 60             & 70             \\
        \bottomrule
    \end{tabular}
    }
\end{table}

We use additional $L_1$ regularization in some settings where we observe signs of overfitting. We report the $L_1$ regularization and split position chosen for different settings in \cref{tb:hyperparameter_TAPS} and \cref{tb:hyperparameter_STAPS}.

We train using single NVIDIA GeForce RTX 3090 for \mnist and \cifar and single NVIDIA TITAN RTX for \TIN. Training and certification times are reported in \cref{tb:time_TAPS} and \cref{tb:time_STAPS}.

\begin{table}[t]
    \centering
    \caption{Hyperparameters for \tool.} \label{tb:hyperparameter_TAPS}
    \resizebox{0.5 \linewidth}{!}{
        \begin{tabular}{ccccc}
            \toprule
            \multicolumn{1}{c}{Dataset} & $\epsilon$ & \# ReLUs in Classifier & $L_1$ & $w$ \\ \midrule
            \multirow{2}{*}{\mnist}     & 0.1        & 3                      & 1e-6  & 5   \\
                                        & 0.3        & 1                      & 0     & 5   \\
            \multirow{2}{*}{\cifar}     & 2/255      & 5                      & 2e-6  & 5   \\
                                        & 8/255      & 1                      & 2e-6  & 5   \\
            \TIN                        & 1/255      & 1                      & 0     & 5   \\ \bottomrule
        \end{tabular}
    }
\end{table}

\begin{table}[t]
    \centering
    \caption{Training and certification times for \tool-trained networks.} \label{tb:time_TAPS}
    \resizebox{0.5 \linewidth}{!}{
        \begin{tabular}{ccccS[table-number-alignment = left,table-format=5]S[table-number-alignment = left,table-format=5]}
            \toprule
            \multicolumn{1}{c}{Dataset} & $\epsilon$ & \text{Train Time (s)} & \text{Certify Time (s)} \\ \midrule
            \multirow{2}{*}{\mnist}     & 0.1        & 42$\,$622             & 17$\,$117               \\
                                        & 0.3        & 12$\,$417             & 41$\,$624               \\
            \multirow{2}{*}{\cifar}     & 2/255      & 141$\,$281            & 166$\,$474              \\
                                        & 8/255      & 27$\,$017             & 26$\,$968               \\
            \TIN                        & 1/255      & 306$\,$036            & 23$\,$497               \\ \bottomrule
        \end{tabular}
    }
\end{table}

\begin{table}[t]
    \centering
    \caption{Hyperparameter for \toolp.} \label{tb:hyperparameter_STAPS}
    \resizebox{0.5\linewidth}{!}{
        \begin{tabular}{cccccc}
            \toprule
            \multicolumn{1}{c}{Dataset} & $\epsilon$ & \# ReLUs in Classifier & $L_1$ & $w$ & $\tau/\epsilon$ \\
            \midrule
            \multirow{2}{*}{\mnist}     & 0.1        & 1                      & 2e-5  & 5 & 0.4 \\
                                        & 0.3        & 1                      & 2e-6  & 5 & 0.6 \\
            \multirow{2}{*}{\cifar}     & 2/255      & 1                      & 2e-6  & 2 & 0.1 \\
                                        & 8/255      & 1                      & 2e-6  & 5 & 0.7 \\
            \TIN                        & 1/255      & 2                      & 1e-6  & 5 & 0.6 \\
            \bottomrule
        \end{tabular}
    }
\end{table}

\begin{table}[t]
    \centering
    \caption{Training and certification times for \toolp-trained networks.} \label{tb:time_STAPS}
    \resizebox{0.5\linewidth}{!}{
        \begin{tabular}{ccS[table-number-alignment = left,table-format=5]S[table-number-alignment = left,table-format=5]}
            \toprule
            \multicolumn{1}{c}{Dataset} & $\epsilon$ & \text{Train Time (s)} & \text{Certify Time (s)} \\
            \midrule
            \multirow{2}{*}{\mnist}     & 0.1        & 19$\,$865             & 12$\,$943               \\
                                        & 0.3        & 23$\,$613             & 125$\,$768              \\
            \multirow{2}{*}{\cifar}     & 2/255      & 47$\,$631             & 398$\,$245              \\
                                        & 8/255      & 48$\,$706             & 77$\,$793               \\
            \TIN                        & 1/255      & 861$\,$639            & 35$\,$183               \\
            \bottomrule
        \end{tabular}
    }
    \vspace{-2mm}
\end{table}

\subsection{Certification Details}

We combine \ibp \citep{GowalIBP2018}, \crownibp\citep{ZhangCXGSLBH20}, and \mnbab\citep{FerrariMJV22} for certification, running the most precise but also computationally costly \mnbab only on samples not certified by the other methods. We use the same configuration for \mnbab as \citet{MuellerEFV22}. The certification is run on a single NVIDIA TITAN RTX.

\mnbab \cite{FerrariMJV22} is a state-of-the-art \citep{BrixMBJL22,MuellerBBLJ22} neural network verifier, combining the branch-and-bound paradigm \citep{BunelLTTKK20} with precise multi-neuron constraints \citep{MullerMSPV22,SinghGPV19B}. 

We use a mixture of strong adversarial attacks to evaluate adversarial accuracy. First, we run PGD attacks with 5 restarts and 200 iterations each. Then, we run MN-BaB to search for adversarial examples with a timeout of $1000$ seconds.


\section{Extended Evaluation} 
\label{app:extended_exp}


\begin{figure*}[t]
    \centering
    \begin{subfigure}[b]{0.25\linewidth}
        \centering
        \includegraphics[width=1.03\linewidth]{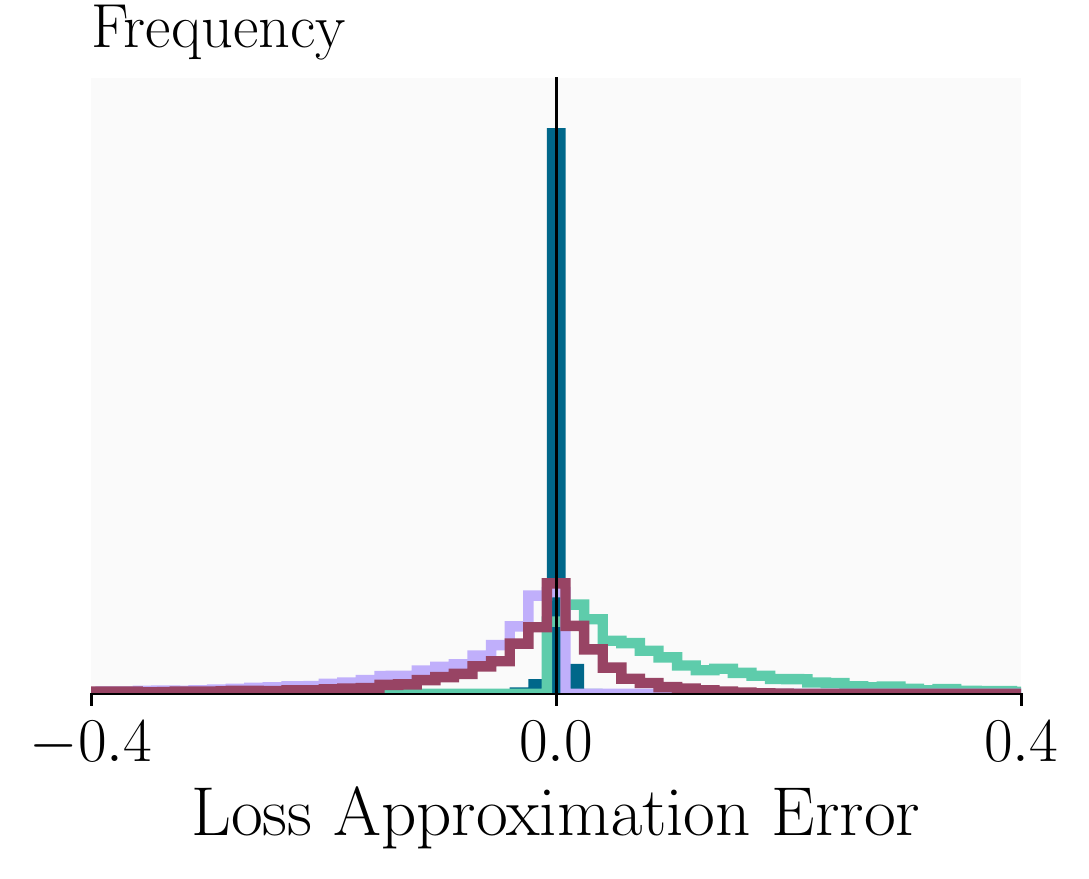}
        \vspace{-6mm}
        \caption{IBP-trained}
    \end{subfigure}
    \hfill
    \begin{subfigure}[b]{0.25\linewidth}
        \centering
        \includegraphics[width=1.03\linewidth]{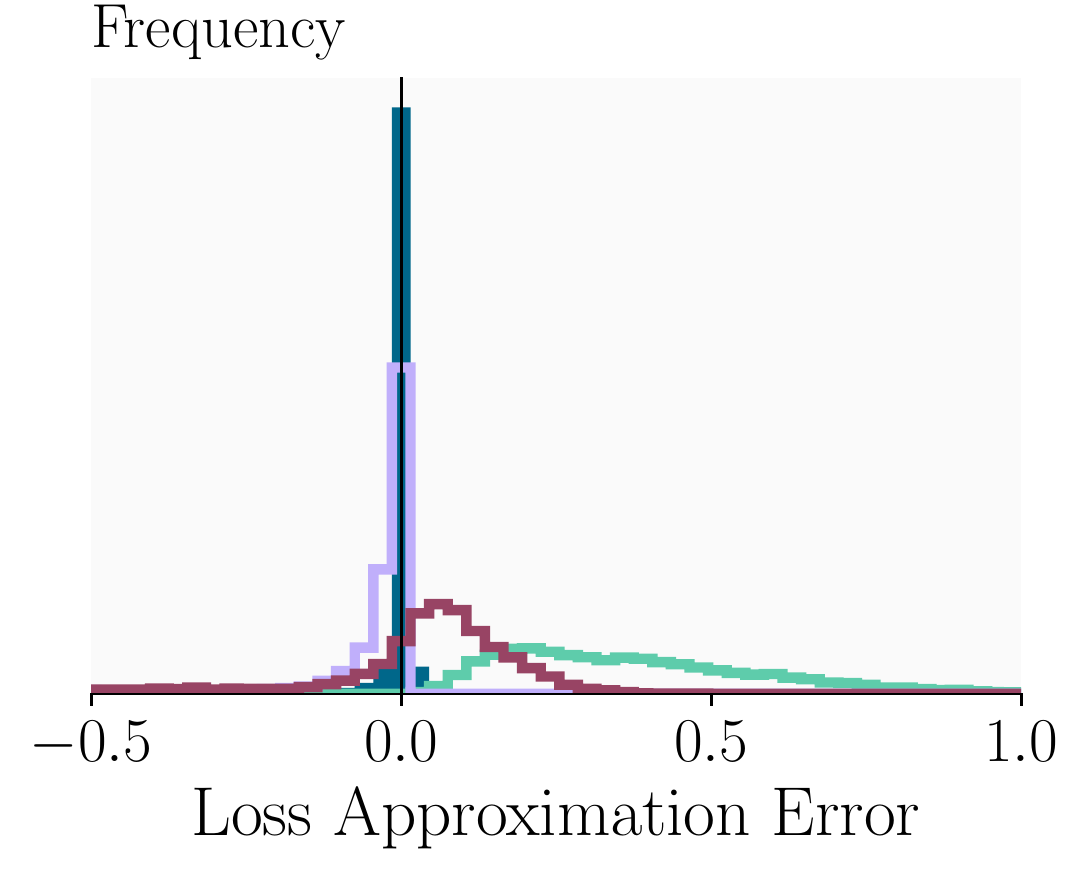}
        \vspace{-6mm}
        \caption{\sabr-trained}
    \end{subfigure}
    \hfill
    \begin{subfigure}[b]{0.42\linewidth}
        \centering
        \includegraphics[width=1.03\linewidth]{figures/exact_tightness_TAPS.pdf}
        \vspace{-6mm}
        \caption{\tool-trained$\qquad\quad\;\;\;\;\;\;$}
    \end{subfigure}
    \vspace{-2mm}
    \caption{Distribution of the worst-case loss approximation errors over test set samples, depending on the training and bounding method. Positive values correspond to over-approximations and negative values to under-approximations. We use an exact MILP encoding \citep{TjengXT19} as reference.}
    \label{fig:empirical_bound_tightness_ext}
    \vspace{-3mm}
\end{figure*}

\paragraph{\tool Accuracy as GoF}
In practice, we want to avoid certifying every model with expensive certification methods, especially during hyperparameter tuning and for early stopping. Therefore, we need a criterion to select models. In this section, we aim to show that \tool accuracy (accuracy of the latent adversarial examples) is a good proxy for goodness of fit (GoF).

We compare the \tool accuracy to adversarial and certified accuracy with all models we get on \mnist and \cifar. The result is shown in \cref{tb:TB_adv_cert}. We can see that the correlations between \tool accuracy and both the adversarial and the certified accuracy are close to 1. In addition, the differences are small and centered at zero, with a small standard deviation. Therefore, we conclude that \tool accuracy is a good estimate of the true robustness, thus a good measurement of GoF. In all the experiments, we perform model selection based on the \tool accuracy.

\begin{table}[t]
    \centering
    \caption{Comparison of \tool accuracy with certified and adversarial accuracy.} 
    \label{tb:TB_adv_cert}
    \resizebox{0.8\linewidth}{!}{
        \begin{tabular}{ccccc} \toprule
            Dataset  & cor(\tool, cert.) & cor(\tool, adv.) & \tool $-$ cert. & \tool $-$ adv. \\ \midrule
            \mnist   & 0.9139                    & 0.9633                   & 0.0122 $\pm$ 0.0141     & 0.0033 $\pm$ 0.0079    \\
            \cifar   & 0.9973                    & 0.9989                   & 0.0028 $\pm$ 0.0095     & -0.0040 $\pm$ 0.0077   \\ \bottomrule
        \end{tabular}
    }
\end{table}

\paragraph{Training Difficulty}

Since \tool is merely a training technique, we can test \tool-trained models using a different classifier split. By design, if the training is successful, then under a given classifier split for testing, the model trained with the same split should have the best \tool accuracy. Although this is often true, we find that in some cases, a smaller classifier split results in higher \tool accuracy, indicating optimization issues.

We measure \tool accuracy for models trained with \ibp and \tool using different splits for \cifar (\cref{fig:trainability_cifar}) and \mnist (\cref{fig:trainability_mnist}). We observe that for \cifar $\epsilon=2/255$ and \mnist, the models trained and tested with the classifier/extractor split achieve the highest \tool accuracies, as expected, indicating a relatively well-behaved optimization problem. However, for \cifar $\epsilon=8/255$, the model trained with a classifier size of 1 achieves the highest \tool accuracy for all test splits and also the best adversarial and certified accuracy (see \cref{fig:split_location_8,tb:abalation_classifier_size_cifar}). This indicates that, in this setting, an earlier split and thus larger classifier component induces a (too) difficult optimization problem, leading to worse overall performance.
\begin{figure}[tp]
    \centering
    \begin{subfigure}[t]{0.48\textwidth}
        \centering
        \includegraphics[width=.96\linewidth]{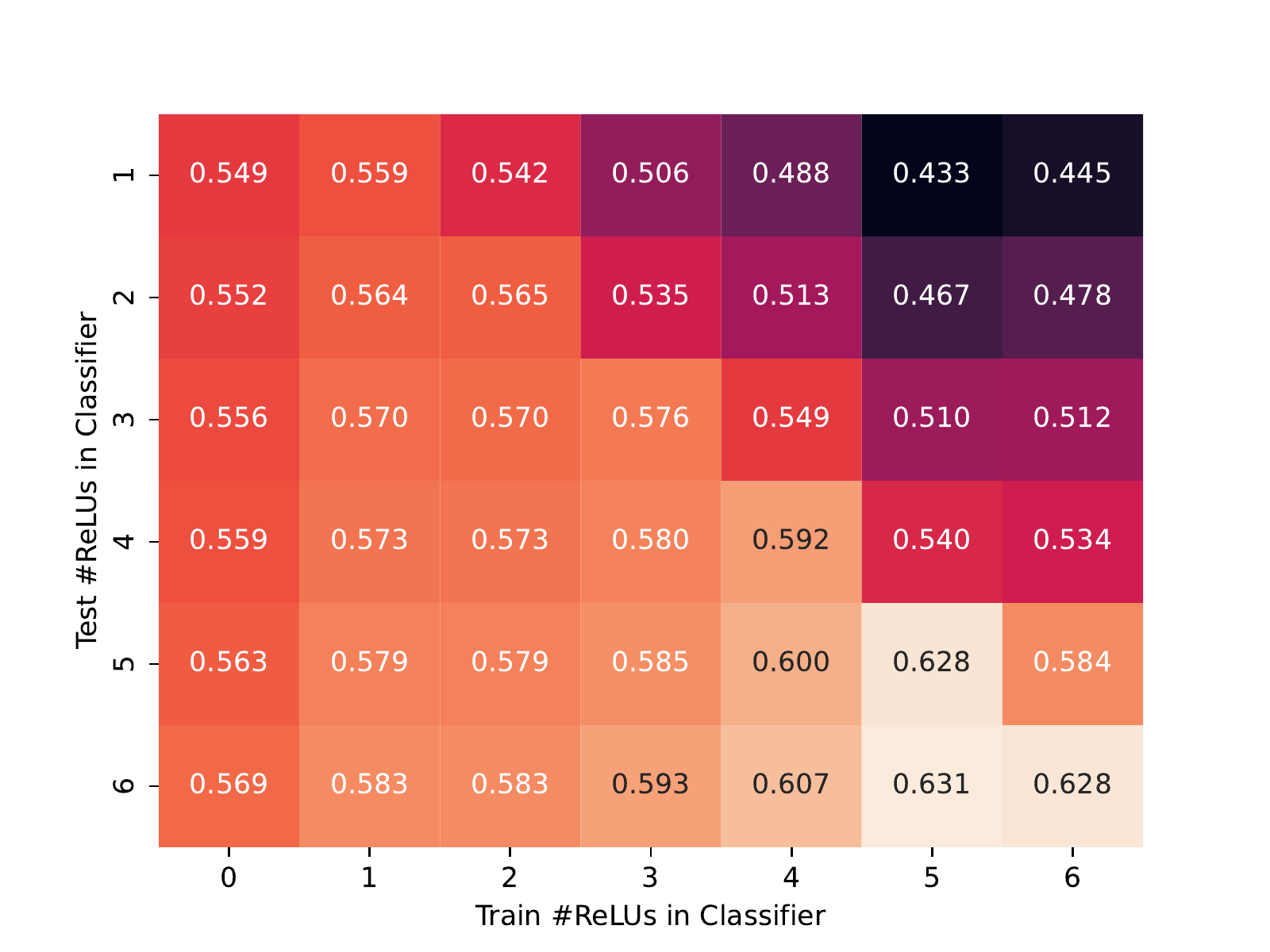}
        \caption*{ $\epsilon=\frac{2}{255}$}
    \end{subfigure}
    \begin{subfigure}[t]{0.48\textwidth}
        \centering
        \includegraphics[width=.96\linewidth]{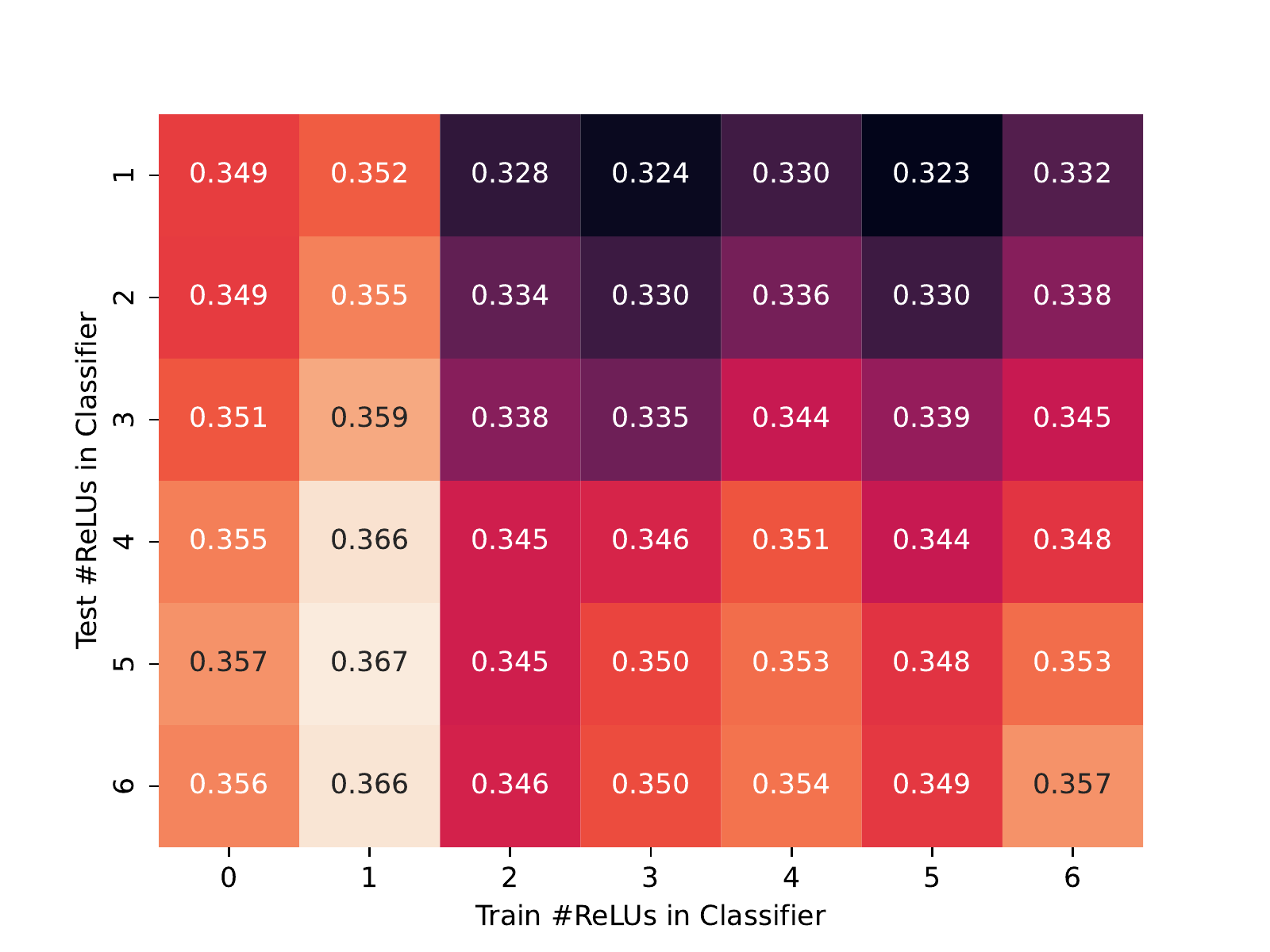}
        \caption*{ $\epsilon=\frac{8}{255}$}
    \end{subfigure}
    \caption{\tool accuracy of models trained with different classifier sizes for \cifar.}
    \label{fig:trainability_cifar}
\end{figure}

\begin{figure}[tp]
    \centering
    \begin{subfigure}[t]{0.48\textwidth}
        \centering
        \includegraphics[width=.96\linewidth]{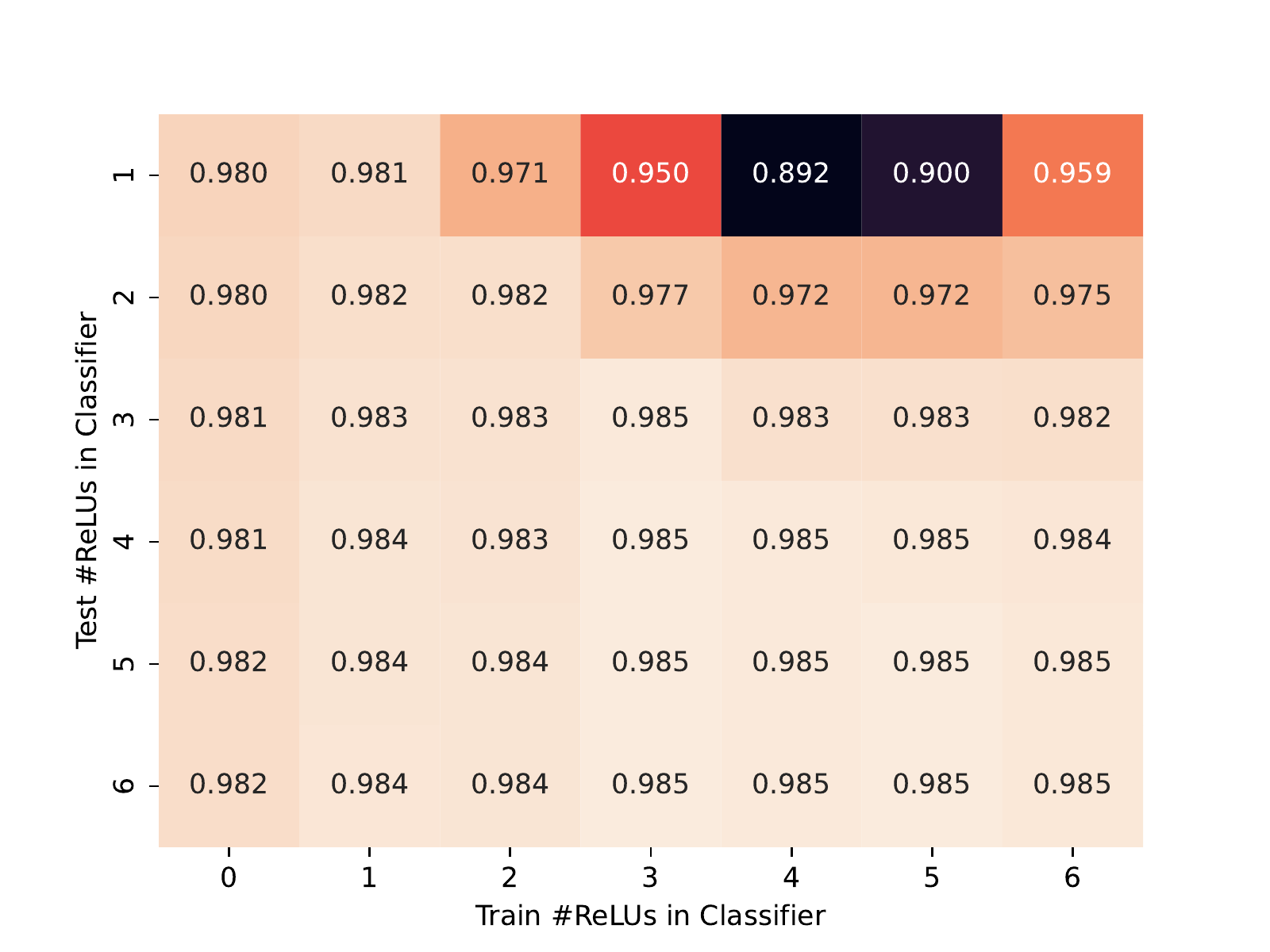}
        \caption*{ $\epsilon=0.1$}
    \end{subfigure}
    \begin{subfigure}[t]{0.48\textwidth}
        \centering
        \includegraphics[width=.96\linewidth]{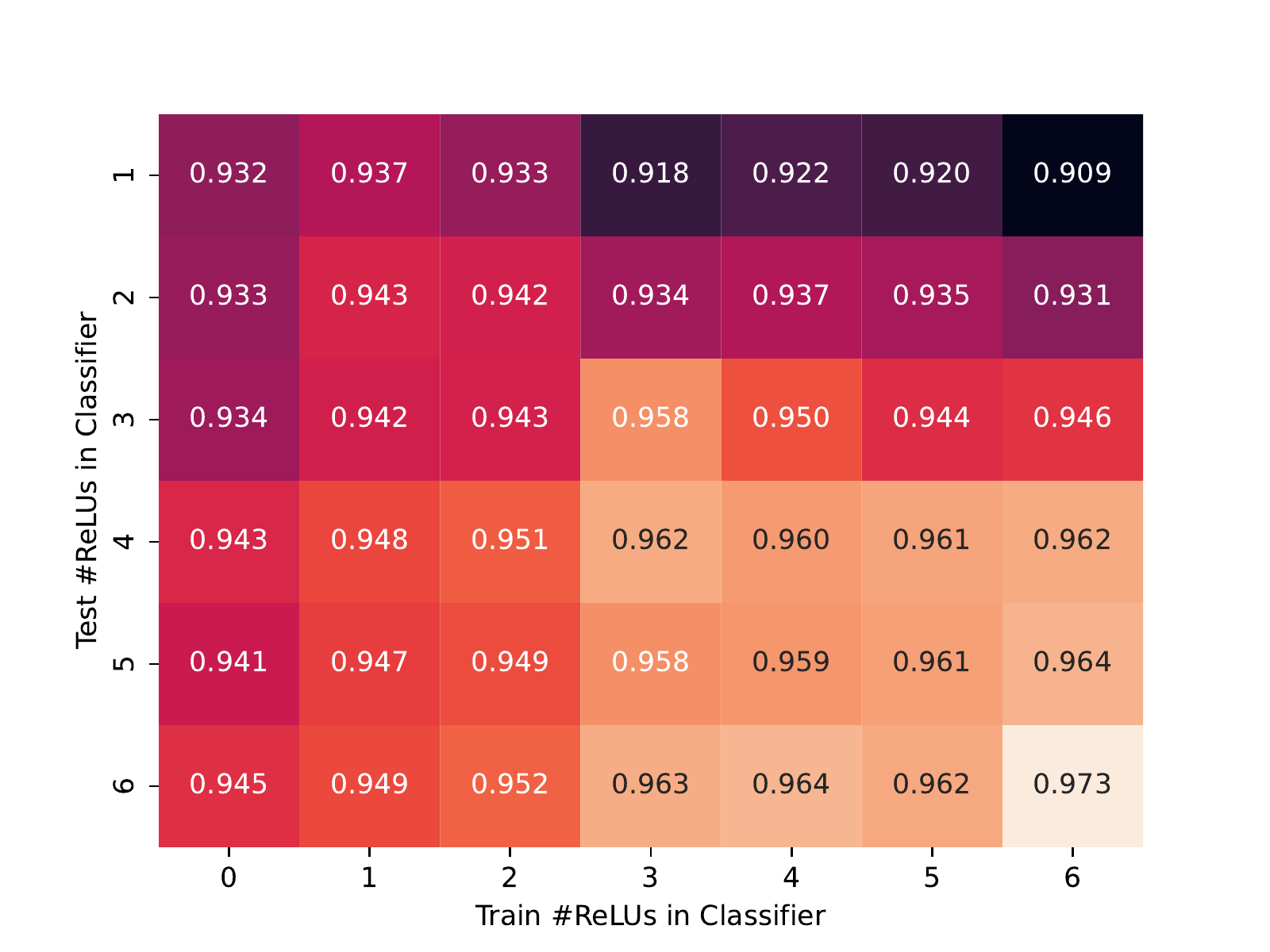}
        \caption*{ $\epsilon=0.3$}
    \end{subfigure}
    \caption{\tool accuracy of models trained with different classifier sizes for \mnist.}
    \label{fig:trainability_mnist}
\end{figure}


\paragraph{Split Position}

We report detailed results for the experiment visualized in \cref{fig:split_location_8} (\cref{sec:split}) in \cref{tb:abalation_classifier_size_mnist} and \cref{tb:abalation_classifier_size_cifar}. We additionally report results on \TIN in \cref{tab:classifier_size_tin}.
\begin{table}[t]
    \centering
    \caption{Effect of split position on accuracies $[\%]$ for fixed model size on \mnist.} \label{tab:classifier_size_mnist}
    \label{tb:abalation_classifier_size_mnist}
    \vspace{5pt}
    \resizebox{0.45\linewidth}{!}{
        \begin{tabular}{cccccc}
            \toprule
            \multirow{2.5}{*}{$\epsilon$} & \multirow{2.5}{*}{{\# ReLUs in Classifier}} & \multirow{2.5}{*}{{Nat.}} & \multirow{2.5}{*}{{Adv.}} & \multicolumn{2}{c}{{Cert.}}\\
            \cmidrule(lr){5-6}
            &&&& {MN-BaB}& {IBP}\\ 
            \midrule
            \multirow{7}{*}{0.1} & 0 & 98.87& 98.16& 98.13& 97.83\\
            & 1 & 99.06& 98.37& 98.31& 96.27 \\
            & 2 & 99.16& 98.35& 98.25& 87.82 \\
            & 3& 99.19& \textbf{98.51}& \textbf{98.39}& 62.83\\
            & 4& \textbf{99.28}& 98.47& 98.03& 4.75 \\
            & 5& 99.22& \textbf{98.51}& 98.17& 9.76 \\
            & 6& 99.09& 98.45& 98.27& 81.89\\ 
            \cmidrule(lr){1-6}
            \multirow{7}{*}{{0.3}} & 0 & 97.60& 93.37& 93.15& 93.08 \\
            & 1 & 97.94& 94.01& \textbf{93.62}& 92.76\\
            & 2 & 98.16& 94.18& 93.55& 91.85 \\
            & 3& 98.63& 94.48& 93.03& 89.40\\
            & 4& 98.7 & 94.85& 93.44& 89.52\\
            & 5& 98.63& 94.64& 93.26& 89.15\\
            & 6& \textbf{98.88}& \textbf{95.11}& 92.70& 85.03 \\
            \bottomrule
        \end{tabular}
    }
\end{table}
\begin{table}
    \centering
    \caption{Effect of split position on accuracies $[\%]$ for fixed model size on \cifar.}
    \label{tb:abalation_classifier_size_cifar}
    \vspace{5pt}
    \resizebox{0.45\linewidth}{!}{
        \begin{tabular}{cccccc}
            \toprule
            \multirow{2.5}{*}{$\epsilon$} & \multirow{2.5}{*}{{\#ReLUs}} & \multirow{2.5}{*}{{Nat.}} & \multirow{2.5}{*}{{Adv.}} & \multicolumn{2}{c}{{Cert.}}\\
            \cmidrule(lr){5-6}
            &&&& {MN-BaB}& {IBP} \\ 
            \midrule
            \multirow{7}{*}{{$\frac{2}{255}$}} & 0 & 67.27& 56.32& 56.14& 53.54\\
            & 1 & 70.10& 57.78& 57.48& 41.86\\
            & 2 & 70.74& 57.83& 57.39& 40.24\\
            & 3& 71.88& 58.89& 58.23& 34.41 \\
            & 4& 72.45& 60.38& 59.47& 31.88\\
            & 5& 75.09& \textbf{63.00}& \textbf{61.56}& 24.36\\
            & 6& \textbf{75.40}& 62.73& 61.11& 24.90\\ 
            \cmidrule(lr){1-6}
            \multirow{7}{*}{{$\frac{8}{255}$}} & 0 & 48.15& 34.63& 34.60& 34.26 \\
            & 1 & 49.76& \textbf{35.29}& \textbf{35.10}& 32.92 \\
            & 2 & 47.28& 33.54& 33.12& 28.94 \\
            & 3& 48.76& 33.50& 33.12& 29.14\\
            & 4& 50.19& 34.78& 34.35& 29.14\\
            & 5& 50.2 & 34.33& 33.72& 28.83\\
            & 6& \textbf{51.03}& 35.25& 34.44& 29.97 \\
            \bottomrule
        \end{tabular}
    }
\end{table}


\paragraph{IBP Regularization} We repeat the experiment reported on for \mnist in \cref{tab:ibp_reg} for \TIN, presenting results in \cref{tab:ibp_reg_tin}. We generally observe the same trends, although peak certification performance is achieved slightly later at $\w_{\text{\tool}}=10$ instead of $\w_{\text{\tool}}=5$.



\begin{table}[t]
    \centering
    \caption{Effect of split position on accuracies $[\%]$ for fixed model size on \TIN.}\label{tab:classifier_size_tin}
    \resizebox{.99\linewidth}{!}{
        \begin{tabular}{ccccccccccc}
            \toprule
            \multirow{2}{*}{ReLU} & \multicolumn{5}{c}{TAPS} & \multicolumn{5}{c}{STAPS}                                                                                                       \\
                                  & Nat. (\%)                & Adv. (\%)                 & Cert. (\%) & Train (s) & Certify (s) & Nat. (\%) & Adv. (\%) & Cert. (\%) & Train (s) & Certify (s) \\ \midrule
            1                     & 28.34                    & 20.94                     & 20.82      & 306$\,$036    & 23$\,$497       & 28.75     & 22.25     & 22.04      & 350$\,$924    & 35$\,$894       \\
            2                     & 27.02                    & 20.94                     & 20.84      & 944$\,$520    & 32$\,$407       & 28.98     & 22.40     & 22.16      & 861$\,$639    & 35$\,$183       \\ \bottomrule
        \end{tabular}
    }
\end{table}

\begin{table}[t]
    \centering
    \caption{Effect of \ibp regularization and the \tool gradient expanding coefficient $\alpha$ for \TIN $\epsilon=\frac{1}{255}$.}\label{tab:ibp_reg_tin}
    \begin{adjustbox}{width=0.45\linewidth,center}
        \begin{threeparttable}
            \begin{tabular}{@{}ccccccc@{}}
                \toprule
                $w_{\text{\tool}}$   & {Avg time (s)} & Nat. (\%) & {Adv. (\%)} & {Cert. (\%)} \\
                \midrule
                {$\L_{\text{\ibp}}$} & 0.28           & 25.00    & 19.72       & 19.72        \\
                {1}                  & 1.17           & 25.83    & 20.24       & 20.22        \\
                {5}                  & 2.34           & 28.34    & 20.94       & 20.82        \\
                {10}                 & 4.12           & 28.23    & \textbf{21.05}       & \textbf{20.89}        \\
                {20}                 & 5.94           & \textbf{28.44}    & 20.68       & 20.44        \\
                \bottomrule
            \end{tabular}
        \end{threeparttable}
    \end{adjustbox}
\end{table}

\paragraph{Repeatability}
Due to the large computational cost of up to 10 GPU-days for some experiments (see \cref{tb:time_TAPS,tb:time_STAPS}), we could not repeat all experiments multiple times to report full statistics. However, we report statistics for the best-performing method for \mnist at $\epsilon=0.1$ and $\epsilon=0.3$ and \cifar at $\epsilon=2/255$ and $\epsilon=8/255$ (see  \cref{tab:mnist_repeatability}). We generally observe small standard deviations, indicating good repeatability of our results.

\begin{table}[]
    \centering
    \caption{Mean and standard deviation (over three repeats) for the method with best certified accuracy.}\label{tab:mnist_repeatability}
    \begin{tabular}{@{}lclccc@{}}
        \toprule
        Dataset & $\epsilon_\infty$ & Method  & Nat. [\%]  & Adv. [\%] & Cert. [\%] \\
        \midrule
        \multirow{2.2}*{\mnist} & 0.1 & \tool & 99.22 $\pm$ 0.03 & 98.45 $\pm$ 0.06 & 98.28 $\pm$ 0.10 \\
                                & 0.3 & \tool & 97.96 $\pm$ 0.04 & 93.96 $\pm$ 0.04 & 93.57 $\pm$ 0.02 \\ 
        \cmidrule(lr){1-2}
        \multirow{2}*{\cifar}   & 2/255 & \toolp & 79.75 $\pm$ 0.23 & 65.91 $\pm$ 0.12 & 62.72 $\pm$ 0.23\\
                                & 8/255 & \tool  & 49.07 $\pm$ 0.61 & 34.75 $\pm$ 0.47 & 34.57 $\pm$ 0.46\\
        \bottomrule
    \end{tabular}
\end{table}

\section{Limitations}

\tool and all other certified training methods can only be applied to mathematically well-defined perturbations of the input such as $\ell_p$-balls, while real-world robustness may require significantly more complex perturbation models. 
Further and similarly to other unsound certified training methods, \tool introduces a new hyperparameter, the split position, that can be tuned to improve performance further beyond the default choice of 1 ReLU layer in the classifier. 
Finally, while training with \tool is similarly computationally expensive as with other recent methods, it is notably more computationally expensive than simple certified training methods such as \ibp.

\section{Reproducibility}
We publish our code, trained models, and detailed instructions on how to reproduce our results at \texttt{\href{https://github.com/eth-sri/taps}{github.com/eth-sri/taps}}, providing an anonymized version to the reviewers\footnote{We provide the codebase with the supplementary material, including instructions on how to download our trained models.}. 
Additionally, we provide detailed descriptions of all hyper-parameter choices, data sets, and preprocessing steps in \cref{sec:exp_setting}.
}{}

\message{^^JLASTPAGE \thepage^^J}

\end{document}